\newtheorem{theorem}{Theorem}
\newtheorem{proposition}{Proposition}
\theoremstyle{definition}
\newtheorem{definition}{Definition}
\newcommand{\bOmega}{\boldsymbol{\varOmega}}
\newcommand{\bTheta}{\boldsymbol{\varTheta}}
\newcommand{\bM}{\boldsymbol{M}}
\newcommand{\bPi}{\boldsymbol{\varPi}}
\newcommand{\bH}{\boldsymbol{H}}
\newcommand{\bR}{\boldsymbol{R}}
\newcommand{\bS}{\boldsymbol{S}}
\newcommand{\bG}{\boldsymbol{G}}
\newcommand{\bXi}{\boldsymbol{\varXi}}
\newcommand{\bI}{\boldsymbol{I}}
\newcommand{\bQ}{\boldsymbol{Q}}
\newcommand{\bg}{\boldsymbol{g}}
\newcommand{\x}{\boldsymbol{x}}
\newcommand{\bp}{\boldsymbol{p}}
\newcommand{\bmu}{\boldsymbol{\mu}}
\newcommand{\bpsi}{\boldsymbol{\psi}}
\newcommand{\btheta}{\boldsymbol{\theta}}
\newcommand{\bbR}{\mathbf{R}}
\newcommand{\cC}{\mathcal{C}}
\newcommand{\cO}{\mathcal{O}}
\newcommand{\T}{{\!\top\!}}
\newcommand{\ones}{\boldsymbol{\mathit{1}}}
\DeclareMathOperator*{\minimize}{\textrm{minimize}}
\DeclareMathOperator{\E}{E}
\DeclareMathOperator{\cone}{cone}
\DeclareMathOperator{\rank}{rank}
\DeclareMathOperator{\krank}{krank}
\DeclareMathOperator{\spark}{spark}
\DeclareMathOperator{\tr}{Tr}
\DeclareMathOperator{\diag}{Diag}
\DeclareMathOperator{\vect}{vec}
\newcommand{\tone}[1]{\textcolor{blue}{#1}}
\newcommand{\ttwo}[1]{\textcolor{green}{#1}}
\newcommand{\tfour}[1]{\textcolor{purple}{#1}}
\newcommand{\tfive}[1]{\textcolor{orange}{#1}}
\newcommand{\tseven}[1]{\textcolor{olive}{#1}}
\newcommand{\teight}[1]{\textcolor{pink}{#1}}
\newcommand{\cpd}[1]{\text{\textlbrackdbl}#1\text{\textrbrackdbl}}
\title{\bf\Large Learning Hidden Markov Models from Pairwise Co-occurrences\\ with Application to Topic Modeling}
\author{\small
\begin{tabular}{ccc}
{\large Kejun Huang} & {\large Xiao Fu} & {\large Nicholas D. Sidiropoulos} \\
\hspace*{.03\textwidth} University of Minnesota \hspace*{.03\textwidth} & Oregon State University & University of Virginia \\
Minneapolis, MN 55455 & Corvallis, OR 97331 & Charlottesville, VA 22904 \\
\texttt{huang663@umn.edu} & \texttt{xiao.fu@oregonstate.edu} & \texttt{nikos@virginia.edu}
\end{tabular}
}
\date{}
\begin{document}

\maketitle

\begin{abstract}
We present a new algorithm for identifying the transition and emission probabilities of a hidden Markov model (HMM) from the emitted data. Expectation-maximization becomes computationally prohibitive for long observation records, which are often required for identification. The new algorithm is particularly suitable for cases where the available sample size is large enough to accurately estimate second-order output probabilities, but not higher-order ones. We show that if one is only able to obtain a reliable estimate of the pairwise co-occurrence probabilities of the emissions, it is still possible to uniquely identify the HMM if the emission probability is \emph{sufficiently scattered}. We apply our method to hidden topic Markov modeling, and demonstrate that we can learn topics with higher quality if documents are modeled as observations of HMMs sharing the same emission (topic) probability, compared to the simple but widely used bag-of-words model.
\end{abstract}

\section{Introduction}

Hidden Markov models (HMMs) are widely used in machine learning when the data samples are time \emph{dependent}, for example in speech recognition, language processing, and video analysis.
The graphical model of a HMM is shown in Figure~\ref{fig:hmm}. HMM models a (time-dependent) sequence of data $\{Y_t\}_{t=0}^T$ as indirect observations of an underlying Markov chain $\{X_t\}_{t=0}^T$ which is not available to us. Homogeneous HMMs are parsimonious models, in the sense that they are fully characterized by the transition probability $\Pr[X_{t+1}|X_{t}]$ and the emission probability $\Pr[Y_t|X_t]$  even though the size of the given data $\{Y_t\}_{t=0}^T$ can be very large. 

Consider a homogeneous HMM such that:
\begin{itemize}[noitemsep]
\item a latent variable $X_t$ can take $K$ possible outcomes $x_1,...,x_K$;
\item an ambient variable $Y_t$ can take $N$ possible outcomes $y_1,...,y_N$.
\end{itemize}
Recall that \cite{rabiner1986introduction,Ghahramani2001}:
\begin{itemize}[noitemsep]
\item Given both $\{X_t\}_{t=0}^T$ and $\{Y_t\}_{t=0}^T$, the complete joint probability factors, and we can easily estimate the transition probability $\Pr[X_{t+1}|X_{t}]$ and the emission probability $\Pr[Y_t|X_t]$.
\item Given only $\{Y_t\}_{t=0}^T$, but assuming we know the underlying transition and emission probabilities, we can calculate the observation likelihood using the forward algorithm, estimate the most likely hidden sequence using the Viterbi algorithm, and compute the posterior probability of the hidden states using the forward-backward algorithm.
\end{itemize}
The most natural problem setting, however, is when neither the hidden state sequence nor the underlying probabilities are known to us---we only have access to a sequence of observations, and our job is to reveal the HMM structure, characterized by the transition matrix $\Pr[X_{t+1}|X_{t}]$ and the emission probability $\Pr[Y_t|X_t]$ from the set of observations $\{Y_t\}_{t=0}^T$.

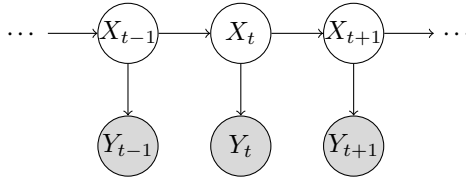
\begin{figure}[t]
\centering
\begin{tikzpicture}[]
\node at (   0,1.5) [circle,draw,minimum size=8mm,inner sep=-5pt] (Xt) {$X_t$};
\node at (-1.5,1.5) [circle,draw,minimum size=8mm,inner sep=-5pt] (Xt-1) {$X_{t-1}$};
\node at ( 1.5,1.5) [circle,draw,minimum size=8mm,inner sep=-5pt] (Xt+1) {$X_{t+1}$};
\node at (   0,0) [circle,draw,minimum size=8mm,inner sep=-5pt,fill=gray!30] (Yt) {$Y_t$};
\node at (-1.5,0) [circle,draw,minimum size=8mm,inner sep=-5pt,fill=gray!30] (Yt-1) {$Y_{t-1}$};
\node at ( 1.5,0) [circle,draw,minimum size=8mm,inner sep=-5pt,fill=gray!30] (Yt+1) {$Y_{t+1}$};
\node at (-2.9,1.5) (before) {\dots};
\node at ( 2.9,1.5) (after)  {\dots};
\draw [->] (before) -- (Xt-1);
\draw [->] (Xt+1) -- (after);
\draw [->] (Xt-1) -- (Xt);
\draw [->] (Xt) -- (Xt+1);
\draw [->] (Xt-1) -- (Yt-1);
\draw [->] (Xt) -- (Yt);
\draw [->] (Xt+1) -- (Yt+1);
\end{tikzpicture}
\caption{The graphical model of a HMM.}
\label{fig:hmm}
\end{figure}

\subsection{Related work}\label{sec:related}

The traditional way of learning a HMM from $\{Y_t\}_{t=0}^T$ is via expectation-maximization (EM) \cite{rabiner1986introduction}, in which the expectation step is performed by calling the forward-backward algorithm. This specific instance of EM is also called the Baum-Welch algorithm \cite{baum1970maximization,Ghahramani2001}. However, the complexity of Baum-Welch is prohibitive when $T$ is relatively large---the complexity of the forward-backward algorithm is $\cO(K^2T)$, but EM converges slowly, so the forward-backward algorithm must be called many times. This is a critical issue,  because a HMM can only be learned with high accuracy when the number of observation samples $T$ is large enough.

One way of designing scalable algorithms for learning HMMs is to work with sufficient statistics---a summary of the given observation sequence, whose size does not grow with~$T$. Throughout this paper we assume that the HMM process is stationary (time-invariant), which is true almost surely if the underlying Markov process is ergodic and the process has been going on for a reasonable amount of time. 
With $T$ large enough, we can accurately estimate the co-occurrence probability between two consecutive emissions $\Pr[Y_t,Y_{t+1}]$. According to the graphical model shown in Figure~\ref{fig:hmm}, it is easy to see that given the value of $X_t$, $Y_t$ is conditionally independent of all the other variables, leading to the factorization
\begin{align}\label{eq:hmm_fac2}
\Pr[Y_t,Y_{t+1}] 
= \sum_{k,j=1}^{K}&\Pr[Y_t|X_t=x_k]\Pr[Y_{t+1}|X_{t+1}=x_j]\Pr[X_t=x_k,X_{t+1}=x_j]
\end{align}
Let $\bOmega\in\bbR^{N\times N}$, $\bM\in\bbR^{N\times K}$, and $\bTheta\in\bbR^{K\times K}$, with their elements defined as
\begin{align*}
\varOmega_{n\ell} &= \Pr[Y_t=y_n,Y_{t+1}=y_\ell],\\
M_{nk} &= \Pr[Y_t=y_n|X_t=x_k],\\
\varTheta_{kj} &= \Pr[X_t=x_k,X_{t+1}=x_j].
\end{align*}
Then, equations \eqref{eq:hmm_fac2} can be written compactly as
\begin{align}\label{eq:MTM}
\bOmega = \bM\bTheta\bM^\T.
\end{align}
Noticing that $\eqref{eq:MTM}$ is a nonnegative matrix tri-factorization with a number of inconsequential constraints for $\bM$ and $\bTheta$ to properly represent probabilities, \citet{Vanluyten2008,Lakshminarayanan2010,Cybenko2011} proposed using nonnegative matrix factorization (NMF) to estimate the HMM probabilities. However, NMF-based methods have a serious shortcoming in this context: the tri-factorization~\eqref{eq:MTM} is in general not unique, because it is fairly easy to find a nonsingular matrix $\bQ$ such that both $\bM\bQ\geq0$ and $\bQ^{-1}\bTheta\bQ^{-\T}\geq0$, and then $\widetilde{\bM}=\bM\bQ$ and $\widetilde{\bTheta}=\bQ^{-1}\bTheta\bQ^{-\T}$ are equally good solutions in terms of reconstructing the co-occurrence matrix $\bOmega$. When we use $(\bM,\bTheta)$ and $(\widetilde{\bM},\widetilde{\bTheta})$ to perform HMM inference, such as estimating hidden states or predicting new emissions, the two models often yield completely different results, unless $\bQ$ is a permutation matrix.

A number of works propose to use \emph{tensor} methods to overcome the identifiability issue. Instead of working with the pairwise co-occurrence probabilities, they start by estimating the joint probabilities of three consecutive observations $\Pr[Y_{t-1},Y_t,Y_{t+1}]$. Noticing that these three random variables are conditionally independent given $X_t$, the triple-occurrence probability factors into
\begin{align*}
\Pr[Y_{t-1},Y_t,Y_{t+1}] = \sum_{k=1}^{K}\Pr[X_t=x_k]\Pr[Y_{t-1}|X_t=x_k]\Pr[Y_{t}|X_t=x_k]\Pr[Y_{t+1}|X_t=x_k],
\end{align*}
which admits a tensor \emph{canonical polyadic decomposition} (CPD) model \cite{Hsu2009,Anandkumar2012,Anandkumar2014}. Assuming $K\leq N$, the CPD is essentially unique if two of the three factor matrices have full column rank, and the other one is not rank one \cite{harshman1970foundations}; in the context of HMMs, this is equivalent to assuming $\bM$ and $\bTheta$ both have linearly independent columns, which is a relatively mild condition. The CPD is known to be unique under much more relaxed conditions \cite{sidiropoulos2017tensor}, but in order to uniquely retrieve the transition probability using the relationship
\[
\Pr[Y_{t+1}|X_{t}] = \sum_{j=1}^{K}
\Pr[Y_{t+1}|X_{t+1}\!=\!x_j]\Pr[X_{t+1}\!=\!x_j|X_t],
\]
$K\leq N$ is actually the best we can achieve using triple-occurrences without making further assumptions. 
\footnote{In the supplementary material, we prove that if the emission probability is \emph{generic} and the transition probability is \emph{sparse}, the HMM can be uniquely identified from triple-occurrence probability for \mbox{$K<N^2/16$} using the latest tensor identifiability result \cite{chiantini2012generic}.} 
A~salient feature in this case is that if the triple-occurrence probability $\Pr[Y_{t-1},Y_t,Y_{t+1}]$ is exactly given (meaning the rank of the triple-occurrence tensor is indeed smaller than $N$), the CPD can be efficiently calculated using generalized eigendecomposition and related algebraic methods \cite{sanchez1990tensorial,leurgans1993decomposition,DeLathauwer2004a}. These methods do not work well, however, when the low-rank tensor is perturbed; e.g., due to insufficient mixing / sample averaging of the triple occurrence probabilities.

It is also possible to handle cases where $K>N$. The key observation is that, given $X_t$, $Y_t$ is conditionally independent of $Y_{t-1},...,Y_{t-\tau}$ and $Y_{t+1},...,Y_{t+\tau}$. Then, grouping $Y_{t-1},...,Y_{t-\tau}$ into a single categorical variable taking $N^\tau$ possible outcomes, and $Y_{t+1},...,Y_{t+\tau}$ into another one, we can construct a much bigger tensor of size $N^\tau\times N^\tau\times N$, and then uniquely identify the underlying HMM structure with $K\gg N$ as long as certain linear independence requirements are satisfied for the conditional distribution of the \emph{grouped} variables \cite{Allman2009,bhaskara14a,Huang2016,Sharan2017}. It is intuitively clear that for fixed $N$, we need a much larger realization length $T$ in order to accurately estimate $(2\tau+1)$-occurrence probabilities as $\tau$ grows, which is the price we need to pay for learning a HMM with a larger number of hidden states.

\subsection{This paper}

The focus of this paper is on cases where $K \leq N$, and $T$ is large enough to obtain accurate estimate of $\Pr[Y_t,Y_{t+1}]$, but not large enough to accurately estimate triple or higher-order occurrence probabilities. We {\em prove} that it is actually possible to recover the latent structure of an HMM only from pairwise co-occurrence probabilities $\Pr[Y_t,Y_{t+1}]$, provided that the underlying emission probability $\Pr[Y_t|X_t]$ is \emph{sufficiently scattered}. Compared to the existing NMF-based HMM learning approaches, our formulation employs a different (determinant-based) criterion to ensure identifiability of the HMM parameters. Our matrix factorization approach resolves cases that cannot be handled by tensor methods, namely when $T$ is insufficient to estimate third-order probabilities, under an additional condition that is quite mild: that the emission probability matrix $\bM$ must be \emph{sufficiently scattered}, rather than simply full column-rank.

We apply our method to hidden topic Markov modeling (HTMM) \cite{gruber2007hidden}, in which case the number of hidden states (topics) is indeed much smaller than the number of ambient states (words). HTMM goes beyond the simple and widely used bag-of-words model by assuming that (ordered) words in a document are emitted from a hidden topic sequence that evolves according to a Markov model.  We show improved performance on real data when using this simple and intuitive model to take word ordering into account when learning topics, which also benefits from our identifiability guaranteed matrix factorization method. 

As an illustrative example, we showcase the inferred topic of each word in a news article (removing stop words) in Figure~\ref{fig:topic}, taken from the Reuters21578 data set obtained at \cite{reuters21578}.
As we can see, HTMM gets much more consistent and smooth inferred topics compared to that obtained from a bag-of-words model (cf. supplementary material for details). This result agrees with human understanding.

\begin{figure}[t]
\begin{framed}
\teight{china} \tfive{daily} \tfour{vermin eat} \tseven{pct} \tone{grain stocks survey provinces} \tseven{and} \tone{cities showed} \tfour{vermin consume} \tseven{and pct} \teight{china} \tone{grain stocks} \teight{china} \tfive{daily} \tseven{each} \tone{year} \tseven{mln} \teight{tonnes} \tseven{pct} \teight{china} \tfive{fruit} \ttwo{output} \tone{left} \tseven{rot and mln} \teight{tonnes} \tseven{pct} \tfour{vegetables} \tfive{paper} \teight{blamed} \tseven{waste inadequate} \tfive{storage} \tseven{and} \tone{bad} \tfour{preservation} \tseven{methods} \tone{government} \tseven{had launched} \tfive{national} \tone{programme reduce} \tseven{waste calling for} \tone{improved} \tfive{technology storage} \tseven{and} \tfour{preservation} \tseven{and} \tone{greater production} \tfour{additives} \tfive{paper} \tseven{gave details}

\bigskip

\tfour{china} \ttwo{daily} \tfour{vermin eat} \teight{pct} \tfour{grain} \ttwo{stocks} \tfour{survey provinces and cities showed vermin consume and pct} \teight{china} \tfour{grain stocks} \ttwo{china} \tfour{daily} \tfour{each} \tseven{year} \tfour{mln} \teight{tonnes} \tfour{pct} \teight{china} \tfour{fruit output} \ttwo{left} \tfour{rot and mln} \teight{tonnes} \tfour{pct} \teight{vegetables} \tfour{paper blamed waste} \tseven{inadequate} \tfour{storage} \ttwo{and} \tfour{bad preservation methods government had launched national programme} \teight{reduce} \tfour{waste} \tseven{calling} \tfour{for improved} \teight{technology} \tfour{storage} \ttwo{and} \tfour{preservation and greater production} \ttwo{additives} \tfour{paper gave details}
\end{framed}
\caption{Inferred topics of the words shown in different colors, obtained by probabilistic latent semantic analysis (top) and hidden topic Markov model (bottom).}
\label{fig:topic}
\end{figure}

\section{Second-order vs. Third-order Learning}

We start by arguing that for the same observation data $\{Y_t\}_{t=0}^T$, the estimate of the pairwise co-occurrence probability $\Pr[Y_t,Y_{t+1}]$ is always more accurate than that of the triple co-occurrence probability $\Pr[Y_{t-1},Y_t,Y_{t+1}]$. 

Let us first explicitly describe the estimator we use for these probabilities. For each observation $Y_t$, we define a coordinate vector $\bpsi_t\in\bbR^K$, and $\bpsi_t=\bm{e}_k$ if $Y_t=y_k$. The natural estimator for the pairwise co-occurrence probability matrix $\bOmega$ is
\begin{equation}\label{eq:omega2}
\widehat{\bOmega} = \frac{1}{T}\sum_{t=0}^{T-1}\bpsi_t\bpsi_{t+1}^\T,
\end{equation}
and similarly for the triple co-occurrence probability $\underline{\bOmega_3}$
\begin{equation}\label{eq:omega3}
\widehat{\underline{\bOmega_3}} = \frac{1}{T-1}\sum_{t=1}^{T-1}\bpsi_{t-1}\circ\bpsi_t\circ\bpsi_{t+1},
\end{equation}
where $\circ$ denotes vector outer-product. 
\footnote{In some literature $\circ$ is written as the Kronecker product $\otimes$. Strictly speaking, the Kronecker product of three vectors is a very long vector, not a three-way array. For this reason, we chose to use $\circ$ instead of $\otimes$.}

The first observation is that both $\widehat{\bOmega}$ and $\widehat{\underline{\bOmega_3}}$ are unbiased estimators: Obviously $\E(\bpsi_t\bpsi_{t+1}^\T)=\bOmega$ and likewise for the triple-occurrences, and taking their averages does not change the expectation. However, the individual terms in the summation are not independent of each other, making it hard to determine how fast estimates converge to their expectation. The state-of-the-art concentration result for HMMs \cite{kontorovich2006measure} states that for any 1-Lipschitz function $f$
\[
\Pr[|f(\{Y_t\})-\E f(\{Y_t\})|>\epsilon]
\leq 2\exp\left(-T\epsilon^2/c\right),
\]
where $c$ is a constant that only depends on the specific HMM structure but not on the function $f$ (cf. \cite{kontorovich2006measure} for details). Taking $f$ as any entry in $\widehat{\bOmega}$ or $\widehat{\underline{\bOmega_3}}$, we can check that indeed it is 1-Lipschitz, meaning as $T$ goes to infinity, both estimators converge to their expectation with negligible fluctuations.

We now prove that for a given set of observations $\{Y_t\}_{t=0}^T$, $\widehat{\bOmega}$ is always going to be more accurate than $\widehat{\underline{\bOmega_3}}$. Since both of them represent probabilities, we use two common metrics to measure the differences between the estimators and their expectations, the Kullback-Leibler divergence $D_\text{KL}(\cdot)$ and the total-variation difference $D_\text{TV}(\cdot)$.

\begin{proposition}\label{ppst:2>3}
Let $\widehat{\bOmega}$ and $\widehat{\underline{\bOmega_3}}$ be obtained from the same set of observations $\{Y_t\}_{t=0}^T$, we have that
\begin{align*}
D_\textup{KL}(\widehat{\bOmega}\|\bOmega) &\leq 
D_\textup{KL}(\widehat{\underline{\bOmega_3}}\|\underline{\bOmega_3})
\qquad\text{and}\\
D_\textup{TV}(\widehat{\bOmega}\|\bOmega) &\leq 
D_\textup{TV}(\widehat{\underline{\bOmega_3}}\|\underline{\bOmega_3}).
\end{align*}
\end{proposition}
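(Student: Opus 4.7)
The plan is to invoke the data-processing inequality (DPI) for $f$-divergences: any Markov kernel $K$ satisfies $D(K(P)\|K(Q))\le D(P\|Q)$ for both $D_\textup{KL}$ and $D_\textup{TV}$. Since both divergences are $f$-divergences, this handles the two claims uniformly.

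The starting point is the observation that the true pair distribution is a marginal of the true triple distribution. By stationarity,
\[
\varOmega_{n\ell}=\Pr[Y_t{=}y_n,Y_{t+1}{=}y_\ell]=\sum_{m}(\underline{\bOmega_3})_{mn\ell},
\]
so if $M$ denotes marginalization over the first coordinate, then $M(\underline{\bOmega_3})=\bOmega$. Applying the same $M$ to the empirical triple estimator yields $M(\widehat{\underline{\bOmega_3}})_{n\ell}=\frac{1}{T-1}\sum_{t=1}^{T-1}\mathbb{1}[Y_t{=}y_n,Y_{t+1}{=}y_\ell]$, and the DPI immediately gives $D(M(\widehat{\underline{\bOmega_3}})\|\bOmega)\le D(\widehat{\underline{\bOmega_3}}\|\underline{\bOmega_3})$ in both divergences.

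The remaining step is to connect $M(\widehat{\underline{\bOmega_3}})$ to the paper's pair estimator $\widehat{\bOmega}$. A direct calculation shows the two differ only in the boundary pair $(Y_0,Y_1)$ and in the normalization, so $\widehat{\bOmega}=\tfrac{T-1}{T}M(\widehat{\underline{\bOmega_3}})+\tfrac{1}{T}\bpsi_0\bpsi_1^\T$. The main obstacle is closing this boundary gap exactly: a naive convexity bound leaves a residual term $\tfrac{1}{T}D(\bpsi_0\bpsi_1^\T\|\bOmega)$ that is not dominated by the triple-divergence bound. I would try to resolve this either by symmetrizing over both first- and last-coordinate marginals (so that both boundary pairs appear with reduced weight on each side) together with joint convexity of the divergences, or by constructing a single stochastic kernel from triples to pairs that recovers $\widehat{\bOmega}$ exactly from $\widehat{\underline{\bOmega_3}}$ while sending $\underline{\bOmega_3}$ to $\bOmega$; DPI applied to such a kernel would yield the claim directly. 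Pinning down the correct kernel or averaging scheme so that it reproduces $\widehat{\bOmega}$ pointwise is the technical crux.
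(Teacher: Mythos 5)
Your core mechanism is the same as the paper's. The paper observes that $\bOmega(i,j)=\sum_{n}\underline{\bOmega_3}(n,i,j)$, asserts the analogous identity for the empirical estimates, $\widehat{\bOmega}(i,j)=\sum_{n}\widehat{\underline{\bOmega_3}}(n,i,j)$, and then applies the log-sum inequality (for KL) and the triangle inequality via H\"older (for TV). Those two steps are precisely the data-processing inequality specialized to the deterministic marginalization kernel, so your DPI framing is a restatement of the paper's argument rather than a different route.

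The difference is that you correctly notice the empirical identity is not exact with the estimators as defined in \eqref{eq:omega2}--\eqref{eq:omega3}: the first-mode marginal of $\widehat{\underline{\bOmega_3}}$ averages the pairs $(Y_1,Y_2),\dots,(Y_{T-1},Y_T)$ with weight $1/(T-1)$, whereas $\widehat{\bOmega}$ averages $(Y_0,Y_1),\dots,(Y_{T-1},Y_T)$ with weight $1/T$, so the two differ by the boundary pair and a normalization, exactly as in your decomposition $\widehat{\bOmega}=\tfrac{T-1}{T}M(\widehat{\underline{\bOmega_3}})+\tfrac{1}{T}\bpsi_0\bpsi_1^\T$. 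The paper silently ignores this discrepancy (equivalently, it implicitly takes both estimators over the same index range). You flag it as the crux but leave it open, and neither of your proposed fixes closes it: joint convexity leaves an additive residual $\tfrac{1}{T}D(\bpsi_0\bpsi_1^\T\|\bOmega)$ that the right-hand side does not dominate, and the symmetrized marginal $\tfrac12(M_{12}+M_{23})$ applied to $\widehat{\underline{\bOmega_3}}$ equals $\tfrac{T}{T-1}\widehat{\bOmega}-\tfrac{1}{2(T-1)}(\bpsi_0\bpsi_1^\T+\bpsi_{T-1}\bpsi_T^\T)$, which still under-counts the two boundary pairs, so no fixed triple-to-pair kernel reproduces $\widehat{\bOmega}$ pointwise. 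As written your proof is therefore incomplete. The clean resolution is simply to take the pairwise estimator over the same index range $t=1,\dots,T-1$ (or to accept the statement up to an $O(1/T)$ boundary correction); once the empirical marginalization identity holds exactly, your DPI step---which is the paper's log-sum/H\"older argument---finishes the proof in one line.
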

The proof of Proposition~\ref{ppst:2>3} is relegated to the supplementary material.

\section{Identifiability of HMMs from Pairwise  Co-occurrence Probabilities}

The arguments made in the previous section motivate going back to matrix factorization methods for learning a HMM when the realization length $T$ is not large enough to obtain accurate estimates of triple co-occurrence probabilities. As we have explained in \S\ref{sec:related}, the co-occurrence probability matrix $\bOmega$ admits a nonnegative matrix tri-factorization model \eqref{eq:MTM}. There are a number of additional equality constraints. Columns of $\bM$ represent conditional distributions, so $\ones^\T\bM=\ones^\T$. Matrix $\bTheta$ represents the joint distribution between two consecutive Markovian variables, therefore $\ones^\T\bTheta\ones=1$. Furthermore, we have that $\bTheta\ones$ and $\bTheta^\T\ones$ represent $\Pr[X_t]$ and $\Pr[X_{t+1}]$ respectively, and since we assume that the Markov chain is stationary, they are the same, i.e., $\bTheta\ones=\bTheta^\T\ones$. Notice that this does not imply that $\bTheta$ is symmetric, and in fact it is often not symmetric.

%
%

\citet{huang2016nips} considered a factorization model similar to \eqref{eq:MTM} in a different context, and showed that identifiability can be achieved under a reasonable assumption called \emph{sufficiently scattered}, defined as follows.

\begin{definition}[sufficiently scattered]\label{def:suf_scat}
Let $\cone(\bM^\T)^*$ denote the polyhedral cone $\{\x:\bM\x\geq 0\}$, and $\cC$ denote the elliptical cone $\{\x:\|\x\|\leq\ones^\T\x\}$. Matrix $\bM$ is called \textbf{sufficiently scattered} if it satisfies that:
(i)  $\cone(\bM^\T)^*\subseteq\cC$, and 
(ii) $\cone({\bM}^\T)^\ast\cap{\rm bd}\cC=\{\lambda {\bm e}_k:\lambda\geq 0,k=1,...,K\}$,
where ${\rm bd}\cC$ denotes the boundary of $\cC$, $\{\x:\|\x\|=\ones^{\T}\x\}$.
\end{definition}

The sufficiently scattered condition was first proposed in \cite{huang2014tsp} to establish uniqueness conditions for the widely used \emph{nonnegative matrix factorization} (NMF). For the NMF model $\bOmega=\bm{WH}^\T$, if both $\bm{W}$ and $\bm{H}$ are sufficiently scattered, then the nonnegative decomposition is unique up to column permutation and scaling. Follow up work strengthened and extended the identifiability results based on this geometry inspired condition \cite{fu2015bss,huang2016nips,fu2017spl}. A similar tri-factorization model was considered in \cite{huang2016nips} in the context of bag-of-words topic modeling, and it was shown that among all feasible solutions of~\eqref{eq:MTM}, if we find one that minimizes $|\det\bTheta|$, then it recovers the ground-truth latent factors $\bM$ and $\bTheta$, assuming the ground-truth $\bM$ is sufficiently scattered. In our present context, we therefore propose the following problem formulation:
\begin{subequations}\label{prob:main}
\begin{align}
\minimize_{\bTheta,\bM}~~~ & |\det\bTheta| \\
\textrm{subject to}~~~ & \bOmega=\bM\bTheta\bM^\T, \label{eq:Omega}\\
& \bTheta\geq0, \bTheta\ones=\bTheta^\T\ones, \ones^\T\bTheta\ones=1, \label{eq:Theta}\\
& \bM\geq0, \ones^\T\bM=\ones^\T. 
\end{align}
\end{subequations}

Regarding Problem~\eqref{prob:main}, we have the following identifiability result.
\begin{theorem}\label{thm:unique}
\cite{huang2016nips}
Suppose $\bOmega$ is constructed as 
$\bOmega=\bM_\natural\bTheta_\natural\bM_\natural^\T$,
where $\bM_\natural$ and $\bTheta_\natural$ satisfy the constraints in~\eqref{prob:main}, and in addition
(i) $\rank(\bTheta_\natural)=K$ and
(ii) $\bM_\natural$ is \textup{\bf sufficiently scattered}.
Let $(\bM_\star,\bTheta_\star)$ be an optimal solution for~\eqref{prob:main}, then there must exist a permutation matrix $\bPi\in\bbR^{K\times K}$ such that
\[
\bM_\natural = \bM_\star\bPi, \qquad
\bTheta_\natural = \bPi^\T\bTheta_\star\bPi.
\]
\end{theorem}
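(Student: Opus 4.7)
My plan is to reduce the constraint $\bOmega=\bM\bTheta\bM^\T$ to a single change-of-basis matrix $\bQ$ and then apply Hadamard's inequality. Since $\rank(\bTheta_\natural)=K$ forces $\rank(\bOmega)=K$, every feasible $\bM$ must have full column rank $K$ with the same column space as $\bM_\natural$. Hence there exists an invertible $\bQ\in\bbR^{K\times K}$ with $\bM=\bM_\natural\bQ$, and substituting into $\bM\bTheta\bM^\T=\bM_\natural\bTheta_\natural\bM_\natural^\T$ (then left/right pseudo-inverting by $\bM_\natural$) yields $\bTheta=\bQ^{-1}\bTheta_\natural\bQ^{-\T}$. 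Consequently
\[
|\det\bTheta|\;=\;\frac{|\det\bTheta_\natural|}{|\det\bQ|^2},
\]
so minimizing $|\det\bTheta|$ over the feasible set is equivalent to \emph{maximizing} $|\det\bQ|$ over all $\bQ$ compatible with the constraints.

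Next I would translate the constraints on $\bM$ into constraints on the columns $\bq_k$ of $\bQ$. The column-sum equality $\ones^\T\bM=\ones^\T$ together with $\ones^\T\bM_\natural=\ones^\T$ gives $\ones^\T\bQ=\ones^\T$, i.e.\ $\ones^\T\bq_k=1$ for every $k$. Nonnegativity $\bM_\natural\bq_k\geq 0$ places each $\bq_k$ in the polyhedral cone $\cone(\bM_\natural^\T)^*$, and clause (i) of Definition~\ref{def:suf_scat} then forces $\bq_k\in\cC$, so $\|\bq_k\|\leq\ones^\T\bq_k=1$. Hadamard's inequality immediately gives
\[
|\det\bQ|\;\leq\;\prod_{k=1}^{K}\|\bq_k\|\;\leq\;1,
\]
and since any permutation matrix $\bPi$ is feasible with $|\det\bPi|=1$, the optimum is attained and equal to $1$.

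The crux is showing that the \emph{only} maximizers are permutation matrices. Equality in Hadamard forces both mutual orthogonality of the columns and $\|\bq_k\|=1$ for every $k$; combined with $\ones^\T\bq_k=1$ this gives $\|\bq_k\|=\ones^\T\bq_k$, so each $\bq_k$ lies on $\mathrm{bd}\,\cC$. Invoking clause (ii) of Definition~\ref{def:suf_scat}, every $\bq_k$ belongs to $\cone(\bM_\natural^\T)^*\cap\mathrm{bd}\,\cC=\{\lambda\bm{e}_j:\lambda\geq 0\}$, and the affine normalization pins $\lambda=1$. Invertibility of $\bQ$ forces the map $k\mapsto j$ to be a bijection, so $\bQ=\bPi$ for some permutation $\bPi$; the identities $\bM_\natural=\bM_\star\bPi$ and $\bTheta_\natural=\bPi^\T\bTheta_\star\bPi$ follow at once.

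The main obstacle I anticipate is carefully combining Hadamard-equality with the affine constraint $\ones^\T\bq_k=1$ to conclude boundary membership, and then cleanly invoking the second clause of sufficient scatteredness to single out the coordinate directions; the remaining steps are essentially bookkeeping on the gauge freedom of the nonnegative tri-factorization. The additional constraints on $\bTheta$ (nonnegativity and matching row/column sums) are not needed to establish uniqueness---they are automatically satisfied once $\bQ$ turns out to be a permutation.
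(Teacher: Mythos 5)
Your argument is correct and is essentially the proof the paper relies on (deferred to the cited reference \cite{huang2016nips}): reduce the gauge freedom to an invertible $\bQ$ with $\ones^\T\bQ=\ones^\T$ and columns in $\cone(\bM_\natural^\T)^*\subseteq\cC$, bound $|\det\bQ|\leq 1$ via Hadamard, and use the second clause of sufficient scatteredness to force the columns onto the coordinate axes. The only implicit step worth making explicit is that the sufficiently scattered condition itself guarantees $\bM_\natural$ has full column rank (otherwise a null vector $\x$ would put both $\pm\x$ in $\cC$, which is impossible), which is what licenses $\rank(\bOmega)=K$ and the change-of-basis representation.
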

One may notice that in \cite{huang2016nips}, there are no constraints on the core matrix $\bTheta$ as we do in~\eqref{eq:Theta}. In terms of identifiability, it is easy to see that if the ground-truth $\bTheta_\natural$ satisfies~\eqref{eq:Theta}, solving~\eqref{prob:main} even without~\eqref{eq:Theta} will produce a solution $\bTheta_\star$ that satisfies~\eqref{eq:Theta}, thanks to uniqueness. In practice when we are given a less accurate $\bOmega$, such ``redundant'' information will help us improve the estimation error, but that goes beyond identifiability consederations.

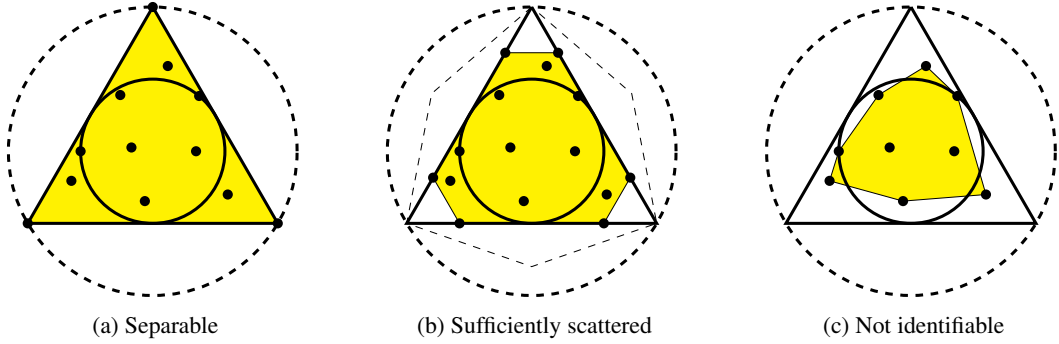
\begin{figure}[t]
\centering
\begin{subfigure}[t]{.3\linewidth}
	\centering
	\resizebox{.8\linewidth}{!}{\begin{tikzpicture}
\fill[fill=yellow] (90:2) -- (210:2) -- (-30:2);
\fill ( 90:2) circle(2pt);
\fill (210:2) circle(2pt);
\fill (-30:2) circle(2pt);

\fill ( 50:1) circle(2pt);
\fill ( 80:1.2) circle(2pt);
\fill (120:.9) circle(2pt);
\fill (180:1) circle(2pt);
\fill (200:1.2) circle(2pt);
\fill (-99:.7) circle(2pt);
\fill (-30:1.2) circle(2pt);

\fill (  0:.6) circle(2pt);
\fill (170:.3) circle(2pt);

\draw[very thick] (90:2) -- (210:2) -- (-30:2) -- (90:2);
\draw[very thick] (0,0) circle (1cm);
\draw[very thick, dashed] (0,0) circle (2cm);
\end{tikzpicture}}
	\caption{Separable}\label{fig:geo-separable}
\end{subfigure}
\begin{subfigure}[t]{.3\linewidth}
	\centering
	\resizebox{.8\linewidth}{!}{\begin{tikzpicture}
\filldraw[fill=yellow]
( 75:1.4142) -- (105:1.4142) -- (195:1.4142) -- (225:1.4142) -- 
(-45:1.4142) -- (-15:1.4142) -- ( 75:1.4142);

\fill ( 75:1.4142) circle(2pt);
\fill (105:1.4142) circle(2pt);
\fill (195:1.4142) circle(2pt);
\fill (225:1.4142) circle(2pt);
\fill (-45:1.4142) circle(2pt);
\fill (-15:1.4142) circle(2pt);

\draw[dashed] (90:2) -- (150:1.6) -- (210:2) -- (-90:1.6) -- (-30:2) -- (30:1.6) -- (90:2);

\fill ( 50:1) circle(2pt);
\fill ( 80:1.2) circle(2pt);
\fill (120:.9) circle(2pt);
\fill (180:1) circle(2pt);
\fill (200:1.2) circle(2pt);
\fill (-99:.7) circle(2pt);
\fill (-30:1.2) circle(2pt);

\fill (  0:.6) circle(2pt);
\fill (170:.3) circle(2pt);

\draw[very thick] (90:2cm) -- (210:2cm) -- (-30:2cm) -- (90:2cm);
\draw[very thick] (0,0) circle (1cm);
\draw[very thick, dashed] (0,0) circle (2cm);
\end{tikzpicture}}
	\caption{Sufficiently scattered}\label{fig:geo-scattered}
\end{subfigure}
\begin{subfigure}[t]{.3\linewidth}
	\centering
	\resizebox{.8\linewidth}{!}{\begin{tikzpicture}
\filldraw[fill=yellow] 
(50:1) -- (80:1.2) -- (120:.9) -- (180:1) -- (200:1.2) -- (-99:.7) -- (-30:1.2) -- (50:1);

\fill ( 50:1) circle(2pt);
\fill ( 80:1.2) circle(2pt);
\fill (120:.9) circle(2pt);
\fill (180:1) circle(2pt);
\fill (200:1.2) circle(2pt);
\fill (-99:.7) circle(2pt);
\fill (-30:1.2) circle(2pt);

\fill (  0:.6) circle(2pt);
\fill (170:.3) circle(2pt);

\draw[very thick] (90:2cm) -- (210:2cm) -- (-30:2cm) -- (90:2cm);
\draw[very thick] (0,0) circle (1cm);
\draw[very thick, dashed] (0,0) circle (2cm);
\end{tikzpicture}}
	\caption{Not identifiable}\label{fig:geo-no_id}
\end{subfigure}
\caption{A geometric illustration of the sufficiently scattered condition (middle), a special case that is separable (left), and a case that is not identifiable (right).}
\label{fig:geo}
\end{figure}

The proof of Theorem~\ref{thm:unique} is referred to \cite{huang2016nips}.
Here we provide some insights on this geometry-inspired sufficiently scattered condition, and discuss why it is a reasonable (and thus practical) assumption. The notation $\cone(\bM^\T)^*=\{\x:\bM\x\geq0\}$ comes from the convention in convex analysis that it is the \emph{dual cone} of the conical hull of the row vectors of $\bM$, i.e., $\cone(\bM^\T)=\{\bM^\T\bm{\alpha}:\bm{\alpha}\geq0\}$. Similarly, we can derive that the dual cone of $\cC$ is $\cC^*=\{\x:\|\x\|\leq\ones^\T\x/\sqrt{K-1}\}$. A useful property of the dual cone is that for two convex cones $\mathcal{A}$ and $\mathcal{B}$, $\mathcal{A}\subseteq\mathcal{B}$ iff $\mathcal{B}^*\subseteq\mathcal{A}^*$. Therefore, the first requirement of sufficiently scattered in Definition~\ref{def:suf_scat} equivalently means
\[
\cC^*\subseteq\cone(\bM^\T).
\]
We give a geometric illustration of the sufficiently scattered condition in Figure~\ref{fig:geo-scattered} for $K=3$, and we focus on the 2-dimensional plane $\ones^\T\x=1$. The intersection between this plane and the nonnegative orthant is the probability simplex, which is the triangle in Figure~\ref{fig:geo-scattered}. The outer circle represents $\cC$, and the inner circle represents $\cC^*$, again intersecting with the plane, respectively. The rows of $\bM$ are scaled to sum up to one, and they are represented by black dots in Figure~\ref{fig:geo-scattered}. Their conical hull is represented by the shaded region. The polygon with dashed lines represents the dual of $\cone(\bM^\T)$, which is indeed a subset of $\cC$, and touches the boundary of $\cC$ only at the coordinate vectors.

Figure~\ref{fig:geo-separable} shows a special case of sufficiently scattered called \emph{separability}, which first appeared in \cite{donoho2004does} also to establish uniqueness of NMF. In this case, all the coordinate vectors appear in rows of $\bM$, therefore $\cone(\bM)$ equals the nonnegative orthant. It makes sense that this condition makes the identification problem easier, but it is also a very restrictive assumption. The sufficiently scattered condition, on the other hand, only requires that the shaded region contains the inner circle, as shown in Figure~\ref{fig:geo-scattered}. Intuitively this requires that the rows of $\bM$ be ``well scattered'' in the probability simplex, but not to the extent of ``separable''. Separability-based HMM identification has been considered in \cite{Barlier2015,Glaude2015}. However, the way they construct second-order statistics is very different from ours. Figure~\ref{fig:geo-no_id} shows a case where $\bM$ is not sufficiently scattered, and it also happens to be a case where $\bM$ is not identifiable.

As we can see, the elliptical cone $\cC^*$ is tangent to all the facets of the nonnegative orthant. As a result, for $\bM$ to be sufficiently scattered, it is necessary that there are enough rows of $\bM$ lie on the boundary of the nonnegative orthant, i.e., $\bM$ is relatively sparse. Specifically, if $\bM$ is sufficiently scattered, then each column of $\bM$ contains at least $K-1$ zeros \cite{huang2014tsp}. This is a very important insight, as exactly checking whether a matrix is sufficiently scattered may be computationally hard. In the present paper we further show the following result.
\begin{proposition}\label{ppst:volume}
The ratio between the volume of the hyperball obtained by intersecting $\ones^\T\x=1$ and $\cC^*$ and the probability simplex is
\begin{equation}\label{eq:ratio}
\frac{1}{\sqrt{\pi K}}\left(\frac{4\pi}{K(K-1)}\right)^{\frac{K-1}{2}}
\Gamma\left(\frac{K}{2}\right).
\end{equation}
\end{proposition}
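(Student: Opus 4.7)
The plan is to explicitly identify the two bodies whose volumes we are comparing, then reduce everything to the standard volume formulas for a Euclidean ball and a regular simplex, and finally clean up using the Legendre duplication formula for $\Gamma$.

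First I would parameterize the affine hyperplane $\{\x:\ones^\T\x=1\}$ around the centroid $\bm c=\ones/K$ of the probability simplex. Writing $\x=\bm c+\bm y$ with $\ones^\T\bm y=0$, we get $\|\x\|^2=\|\bm c\|^2+\|\bm y\|^2=1/K+\|\bm y\|^2$. The constraint $\|\x\|\le \ones^\T\x/\sqrt{K-1}=1/\sqrt{K-1}$ then becomes $\|\bm y\|^2\le \tfrac{1}{K-1}-\tfrac{1}{K}=\tfrac{1}{K(K-1)}$. Hence $\cC^\ast\cap\{\ones^\T\x=1\}$ is a $(K-1)$-dimensional Euclidean ball of radius $r=1/\sqrt{K(K-1)}$, and its $(K-1)$-volume is
\[
V_{\text{ball}}=\frac{\pi^{(K-1)/2}}{\Gamma\!\left(\tfrac{K+1}{2}\right)}\left(\frac{1}{K(K-1)}\right)^{(K-1)/2}.
\]

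Next I would recall the $(K-1)$-dimensional volume of the standard simplex $\Delta_{K-1}=\{\x\ge 0:\ones^\T\x=1\}$. The cleanest derivation is to project $\Delta_{K-1}$ onto the coordinate hyperplane $\{x_K=0\}$, where it becomes the corner simplex of volume $1/(K-1)!$; since the normal to $\{\ones^\T\x=1\}$ makes angle $\cos\theta=1/\sqrt{K}$ with the normal to $\{x_K=0\}$, the intrinsic volume is $V_{\text{simp}}=\sqrt{K}/(K-1)!$.

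The ratio $V_{\text{ball}}/V_{\text{simp}}$ then equals
\[
\frac{\pi^{(K-1)/2}(K-1)!}{\sqrt{K}\,\Gamma\!\left(\tfrac{K+1}{2}\right)}\cdot(K(K-1))^{-(K-1)/2}.
\]
The only remaining step is to recognize $(K-1)!=\Gamma(K)$ and apply the Legendre duplication formula $\Gamma(K)=\frac{2^{K-1}}{\sqrt{\pi}}\Gamma\!\left(\tfrac{K}{2}\right)\Gamma\!\left(\tfrac{K+1}{2}\right)$, which cancels the $\Gamma((K+1)/2)$ in the denominator, pulls out a factor of $2^{K-1}=(4)^{(K-1)/2}$ that combines with $\pi^{(K-1)/2}$ into $(4\pi)^{(K-1)/2}$, and leaves $1/\sqrt{\pi K}$ from the $1/\sqrt{\pi}\cdot 1/\sqrt{K}$ factors. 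This yields exactly the stated expression \eqref{eq:ratio}.

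No serious obstacle is anticipated; the only place to be careful is bookkeeping of the intrinsic volume on the affine hyperplane (the $\sqrt{K}$ factor in the simplex volume) and the clean application of the duplication formula so that the $\Gamma((K+1)/2)$ disappears and the answer comes out in terms of $\Gamma(K/2)$ alone.
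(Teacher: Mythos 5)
Your proposal is correct and follows essentially the same route as the paper's proof: identify the cross-section of $\cC^*$ as a $(K-1)$-ball of radius $1/\sqrt{K(K-1)}$, divide its volume by the simplex volume $\sqrt{K}/(K-1)!$, and simplify the resulting $\Gamma(K)/\Gamma(\tfrac{K+1}{2})$ via the Legendre duplication formula. The only difference is that you explicitly derive the ball radius (by centering at $\ones/K$) and the simplex volume (by projection onto a coordinate hyperplane), which the paper simply asserts.
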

The proof is given in the supplementary material. 
As $K$ grows larger, the volume ratio \eqref{eq:ratio} goes to zero at a super-exponential decay rate. This implies that the volume of the inner sphere quickly becomes negligible compared to the volume of the probability simplex, as $K$ becomes moderately large. The take home point is that, for a practical choice of $K$, say $K\geq10$, as long as $\bM$ satisfies that each column contains at least $K$ zeros, and the positions of the zeros appear relatively random, it is very likely that it is sufficiently scattered, and thus can be uniquely recovered via solving~\eqref{prob:main}.

\section{Algorithm}
Our identifiability analysis based on the sufficiently scattered condition poses an interesting non-convex optimization problem~\eqref{prob:main}. First of all, the given co-occurrence probability $\bOmega$ may not be exact, therefore it may not be a good idea to put~\eqref{eq:Omega} as a hard constraint. For algorithm design, we propose the following modification to problem~\eqref{prob:main}.
\begin{align}\label{prob:alg}
\minimize_{\bTheta,\bM}~~~ & 
\sum_{n,\ell=1}^{N}-\varOmega_{n\ell}\log\!\!\sum_{k,j=1}^{K}\!\!M_{nk}\varTheta_{kj}M_{\ell j} 
 + \lambda|\det\bTheta| \nonumber\\
\textrm{subject to}~~~ & \bM\geq0, \ones^\T\bM=\ones^\T, \\
& \bTheta\geq0, \bTheta\ones=\bTheta^\T\ones, \ones^\T\bTheta\ones=1. \nonumber
\end{align}
In the loss function of~\eqref{prob:alg}, the first term is the Kullback-Leibler distance between the empirical probability $\bOmega$ and the parameterized version $\bM\bTheta\bM^\T$ (ignoring a constant), and the second term is our identifiability-driven regularization. We need to tune the parameter $\lambda$ to yield good estimation results. However, intuitively we should use a $\lambda$ with a relatively small value. Suppose $\bOmega$ is sufficiently accurate, then the priority is to minimize the difference between $\bOmega$ and $\bM\bTheta\bM^\T$; when there exist equally good fits, then the second term comes into play and helps us pick out a solution that is \emph{sufficiently scattered}.

Noticing that the constraints of~\eqref{prob:alg} are all convex, but not the loss function, we propose to design an iterative algorithm to solve~\eqref{prob:alg} using successive convex approximation. At iteration $r$ when the updates are $\bTheta^r$ and $\bM^r$, we define
\begin{align}\label{eq:posterior}
\varPi_{n\ell kj}^r = M_{nk}^r\varTheta_{kj}^rM_{\ell j}^r \bigg/ 
\sum_{\kappa,\iota=1}^{K}M_{n\kappa}^r\varTheta_{\kappa\iota}^rM_{\ell\iota}^r.
\end{align}
Obviously, $\varPi_{n\ell kj}^r\geq0$ and $\sum_{k,j=1}^{K}\varPi_{n\ell kj}^r=1$, which defines a probability distribution for fixed $n$ and $\ell$.
Using Jensen's inequality~\cite{jensen1906fonctions}, we have that
\begin{align}\label{eq:upper1}
-\varOmega_{n\ell}\log\sum_{k,j=1}^{K}M_{nk}\varTheta_{kj}M_{\ell j} \leq \sum_{k,j=1}^{K}-\varOmega_{n\ell}\varPi_{n\ell kj}^r
\left(\log M_{nk} + \log\varTheta_{kj} + \log M_{\ell j} - \log\varPi_{n\ell kj}^r\right)
\end{align}
which defines a convex and locally tight upperbound for the first term in the loss function of \eqref{prob:alg}. Regarding the second term in the loss of~\eqref{prob:alg}, we propose to simply take the linear approximation
\begin{align}\label{eq:approx2}
|\!\det\!\bTheta| \approx |\!\det\!\bTheta^r| + 
|\!\det\!\bTheta^r|\tr\!\left( (\bTheta^r)^{\!-\!1\!}(\bTheta\!-\!\bTheta^r) \right) 
\end{align}

Combining~\eqref{eq:upper1} and~\eqref{eq:approx2}, our successive convex approximation algorithm tries to solve the following convex problem at iteration $r$:
\begin{align}\label{prob:iter}
\minimize_{\bTheta,\bM}~~ & \sum_{n,\ell=1}^{N}\sum_{k,j=1}^{K}
-\varOmega_{n\ell}\varPi_{n\ell kj}^r \left(\log M_{nk} + \log M_{\ell j} + \log\varTheta_{kj} \right) 
+ \lambda\sum_{k,j=1}^{K}\varXi_{kj}^r\varTheta_{kj} \\
\textrm{subject to}~~ & \bM\geq0, \ones^\T\bM=\ones^\T, \nonumber \\
& \bTheta\geq0, \bTheta\ones=\bTheta^\T\ones, \ones^\T\bTheta\ones=1,  \nonumber
\end{align}
where we define $\bm{\varXi}^r=|\det\bTheta^r|(\bTheta^r)^{-\T}$.
Problem~\eqref{prob:iter} decouples with respect to $\bM$ and $\bTheta$, so we can work out their updates individually.

The update of $\bM$ admits a simple closed form solution, which can be derived via checking the KKT conditions. We denote the dual variable corresponding to $\ones^\T\bM=\ones^\T$ as $\bmu\in\bbR^K$. Setting the gradient of the Lagrangian with respect to $M_{nk}$ equal to zero, we have
\[
M_{nk} = 
\sum_{\ell=1}^N\sum_{j=1}^{K}\left(\varOmega_{n\ell}\varPi_{n\ell kj}^r+\varOmega_{\ell n}\varPi_{\ell njk}^r\right) \bigg/ \mu_k
\]
and $\bmu$ should be chosen so that the constraint $\ones^\T\bM=\ones^\T$ is satisfied, which amounts to a simple re-scaling.

The update of $\bTheta$ is not as simple as a closed form expression, but it can still be obtained very efficiently. Noticing that the nonnegativity constraint is implicitly implied by the individual $\log$ functions in the loss function, we propose to solve it using Newton's method with equality constraints \citep[\S10.2]{boyd2004convex}. Although Newton's method requires solving a linear system of equations with $K^2$ number of variables in each iteration, there is special structure we can exploit to reduce the per-iteration complexity down to $\mathcal{O}(K^3)$: The Hessian of the loss function of~\eqref{prob:iter} is diagonal, and the linear equality constraints are highly structured; using block elimination \citep[\S10.4.2]{boyd2004convex}, we ultimately only need to solve a positive definite linear system with $K$ variables. Together with the quadratic convergence rate of Newton's method, the complexity of updating $\bTheta$ is $\mathcal{O}(K^3\log\log\frac{1}{\varepsilon})$, where $\varepsilon$ is the desired accuracy for the $\bTheta$ update. 
Noticing that the complexity of a naive implementation of Newton's method would be $\mathcal{O}(K^6\log\log\frac{1}{\varepsilon})$, the difference is big for moderately large $K$.
The in-line implementation of this tailored Newton's method \textsc{ThetaUpdate} and the detailed derivation can be found in the supplementary material.

\begin{algorithm}[t]
\caption{Proposed Algorithm}\label{alg:hmm_id}
\begin{algorithmic}[1]
\REQUIRE $\lambda>0$
\STATE initialize $\bM$ using \cite{huang2016nips}
\STATE initialize $\bTheta\leftarrow \frac{1}{K(K+1)}(\bI+\ones\!\ones^\T)$
\REPEAT
\STATE $\widetilde{\bOmega}\leftarrow\bOmega\big/\bM\bTheta\bM^{\T}$
\COMMENT{element-wise division}
\STATE $\widetilde{\bM}\leftarrow \bM \ast \left( \widetilde{\bOmega}\bM\bTheta^{\T} + \widetilde{\bOmega}^\T\bM\bTheta \right)$
\STATE $\widetilde{\bTheta}\leftarrow\bM^{\T}\widetilde{\bOmega}\bM$
\STATE $\widetilde{\bM}\leftarrow \widetilde{\bM} \diag(\ones^\T\widetilde{\bM})^{-1}$
\STATE $\widetilde{\bTheta}\leftarrow$ \textsc{ThetaUpdate} \COMMENT{cf. supplementary}
\STATE $(\bM,\bTheta)\leftarrow$ Amijo line search between $(\bM,\bTheta)$ and $(\widetilde{\bM},\widetilde{\bTheta})$
\UNTIL{convergence}
\RETURN $\bM$ and $\bTheta$
\end{algorithmic}
\end{algorithm}

The entire proposed algorithm to solve Problem~\eqref{prob:alg} is summarized in Algorithm~\ref{alg:hmm_id}. Notice that there is an additional line-search step to ensure decrease of the loss function. The constraint set of \eqref{prob:alg} is convex, so the line-search step will not incur infeasibility. Computationally, we find that any operation that involves $\varPi_{n\ell kj}^r$ can be carried out succinctly by defining the intermediate matrix $\widetilde{\bOmega}=\bOmega/\bM\bTheta\bM^{\T}$, where ``$/$'' denotes element-wise division between two matrices of the same size.
The per-iteration complexity of Algorithm~\ref{alg:hmm_id} is completely dominated by the operations that involve computing with $\widetilde{\bOmega}$, notably comparing with that of \textsc{Theta-Update}. In terms of initialization, which is important since we are optimizing a non-convex problem, we propose to use the method by \citet{huang2016nips} to obtain an initialization for $\bM$; for $\bTheta$, it is best if we start with a feasible point (so that the Newton's iterates will remain feasible), and a simple choice is scaling the matrix $\bI+\ones\!\ones^\T$ to sum up to one. Finally, we show that this algorithm converges to a stationary point of Problem~\eqref{prob:alg}, with proof relegated to the supplementary material based on \cite{razaviyayn2013unified}.

\begin{proposition}
Assume \textsc{ThetaUpdate} solves Problem~\eqref{prob:iter} with respect to $\bTheta$ exactly, then
Algorithm~\ref{alg:hmm_id} converges to a stationary point of Problem~\eqref{prob:alg}.
\end{proposition}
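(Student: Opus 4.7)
The plan is to cast Algorithm~\ref{alg:hmm_id} as a successive convex approximation (SCA) method and invoke the convergence machinery of \citet{razaviyayn2013unified}, with the Armijo line search compensating for the fact that the linearized determinant term is a local first-order model rather than a global upper bound. Let $f(\bM,\bTheta)$ denote the objective of~\eqref{prob:alg} and let $g^{r}(\bM,\bTheta)$ denote the surrogate objective of~\eqref{prob:iter}, namely the sum of the Jensen majorant~\eqref{eq:upper1} and the linearization~\eqref{eq:approx2}. The first task is to verify the two properties that SCA-type proofs require at every iterate $(\bM^{r},\bTheta^{r})$: the value match $g^{r}(\bM^{r},\bTheta^{r})=f(\bM^{r},\bTheta^{r})$ and the tangent match $\nabla g^{r}(\bM^{r},\bTheta^{r})=\nabla f(\bM^{r},\bTheta^{r})$.

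Both properties are easy to check. The bound~\eqref{eq:upper1} is Jensen applied with the posterior weights $\varPi_{n\ell kj}^{r}$ defined in~\eqref{eq:posterior}, which are designed exactly so that equality holds at $(\bM^{r},\bTheta^{r})$; the partial derivatives in $\bM$ and $\bTheta$ at that point coincide with those of the original log-term by the standard EM argument. The formula~\eqref{eq:approx2} is the first-order Taylor expansion of $|\det\bTheta|$ around $\bTheta^{r}$, hence agrees with $|\det\bTheta|$ in value and gradient at $\bTheta^{r}$ whenever $\bTheta^{r}$ is non-singular. Because the algorithm is initialized at the strictly positive definite matrix $\frac{1}{K(K+1)}(\bI+\ones\ones^{\T})$ and the Armijo backtracking keeps $\bTheta^{r}$ inside the sublevel set of $f$, one can argue inductively that $\bTheta^{r}$ remains bounded away from $\{\det\bTheta=0\}$, so $\nabla|\det\bTheta|$ is well defined throughout the run.

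These two properties together imply that whenever $(\widetilde{\bM},\widetilde{\bTheta})$ exactly minimizes the convex surrogate~\eqref{prob:iter} over the (convex and compact) feasible set, the inequality $\langle\nabla g^{r}(\bM^{r},\bTheta^{r}),(\widetilde{\bM}-\bM^{r},\widetilde{\bTheta}-\bTheta^{r})\rangle\le 0$ from first-order optimality of the surrogate transfers to $\nabla f$ via tangent matching, and so $(\widetilde{\bM}-\bM^{r},\widetilde{\bTheta}-\bTheta^{r})$ is a feasible descent direction for $f$ unless $(\bM^{r},\bTheta^{r})$ is already stationary. The Armijo line search in Step~9 of Algorithm~\ref{alg:hmm_id} therefore produces a strictly decreasing step of provably sufficient magnitude. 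Since the feasible set is compact and $f$ is continuous on the part of it where $\det\bTheta\ne 0$, the sequence $\{f(\bM^{r},\bTheta^{r})\}$ is monotone and bounded below, hence convergent, and $\{(\bM^{r},\bTheta^{r})\}$ has at least one accumulation point.

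The final step is to upgrade subsequential convergence to stationarity of every limit point. Along any convergent subsequence $(\bM^{r_{j}},\bTheta^{r_{j}})\to(\bM^{\infty},\bTheta^{\infty})$, the posterior weights~\eqref{eq:posterior} and the gradient $|\det\bTheta^{r}|(\bTheta^{r})^{-\T}$ both depend continuously on $(\bM^{r},\bTheta^{r})$ away from the singular locus, so the corresponding surrogate minimizers converge to the minimizer of the surrogate built at $(\bM^{\infty},\bTheta^{\infty})$. If this limit point were not stationary for $f$ over the feasible set, the resulting direction would yield a strict Armijo decrease bounded below by a positive constant, contradicting convergence of $\{f(\bM^{r},\bTheta^{r})\}$. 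Hence the limit point satisfies the KKT conditions of~\eqref{prob:alg}. The main obstacle in this proof is precisely the non-majorizing linearization of $|\det\bTheta|$: it forces us to keep $\bTheta^{r}$ uniformly bounded away from singularity and to rely on the Armijo safeguard instead of a pure BSUM monotonicity argument, which is exactly the line-search variant of the SCA framework in \cite{razaviyayn2013unified} that we invoke.
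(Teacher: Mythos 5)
Your proposal follows essentially the same route as the paper: the paper's entire proof is to observe that Algorithm~\ref{alg:hmm_id} is an instance of the block successive convex approximation scheme of \citet{razaviyayn2013unified} with a single block, invoke their Theorem~4 to conclude that every limit point is stationary, and use compactness of the feasible set to guarantee that limit points exist; your write-up simply unpacks the hypotheses of that theorem (value and tangent consistency of the Jensen majorant and of the determinant linearization, descent via the Armijo step, and a continuity argument at accumulation points). The only shaky sub-step is your claim that remaining in a sublevel set of the objective keeps $\bTheta^r$ bounded away from $\{\det\bTheta=0\}$ --- shrinking $|\det\bTheta|$ actually \emph{decreases} the regularizer, so that inference does not follow --- but this regularity issue is one the paper's own proof does not engage with either.
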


\begin{figure}[t]
\centering
\includegraphics[width=\linewidth]{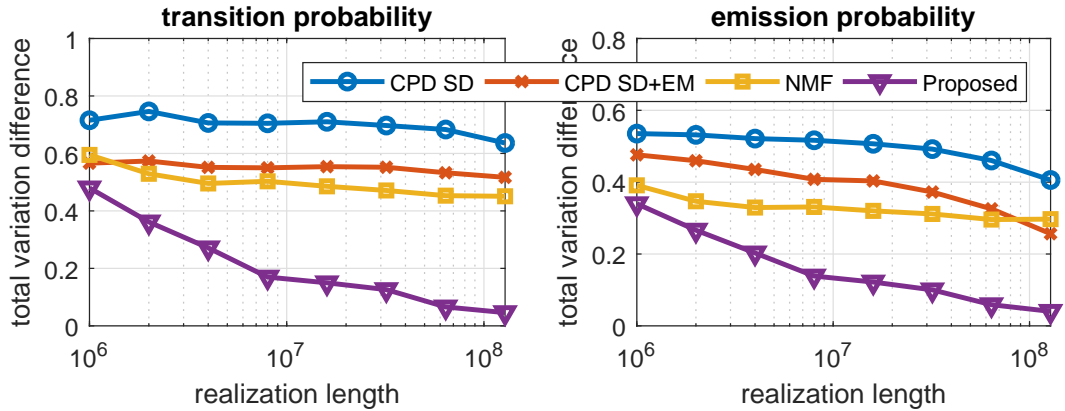}
\caption{The total variation difference between the ground truth and estimated transition probability (top) and emission probability (bottom). The total variation difference of the emission probabilities is calculated as $\frac{1}{2K}\|\bM_\natural-\bM_\star\|_1$, since each column of the matrices indicates a (conditional) probability, and the total variation difference is equal to one half of the $L_1$-norm; and similarly for that of the transition probabilities after rescaling the rows of $\bOmega_\natural$ and $\bOmega_\star$ to sum up to one. The result is averaged over 10 random problem instances.}
\label{fig:sim_exp1}
\end{figure}


\section{Validation on Synthetic Data}

In this section we validate the identifiability performance on synthetic data. In this case, the underlying transition and emission probabilities are generated synthetically, and we compare them with the estimated ones to evaluate performance. The simulations are conducted in MATLAB using the HMM toolbox, which includes functions to generate observation sequences given transition and emission probabilities, as well as an implementation of the Baum-Welch algorithm \cite{baum1970maximization}, i.e., the EM algorithm, to estimate the transition and emission probabilities using the observations. Unfortunately, even for some moderate problem sizes we considered, the streamlined MATLAB implementation of the Baum-Welch algorithm was not able to execute within reasonable amount of time, so its performance is not included here.
For the baselines, we compare with the plain NMF approach using multiplicative update \cite{Vanluyten2008} and the tensor CPD approach \cite{Sharan2017} using simultaneous diagonalization with Tensorlab \cite{tensorlab3.0}. Since we work with empirical distributions instead of exact probabilities, the result of the simultaneous diagonalization is not going to be optimal. We therefore use it to initialize the EM algorithm for fitting a nonnegative tensor factorization with KL divergence loss \cite{shashanka2008probabilistic} for refinement.

We focus on the cases when the number of hidden states $K$ is smaller than the number observed states $N$. As we explained in the introduction, even for this seemingly easier case, it is not known that we can guarantee unique recovery of the HMM parameters \emph{just from the pair-wise co-occurrence probability}. What is known is that the tensor CPD approach is able to guarantee identifiability given exact triple-occurrence probability. We will demonstrate in this section that it is much harder to obtain accurate triple-occurrence probability comparing with the co-occurrence probability. As a result, if the sufficiently scattered assumption holds for the emission probability, the estimated parameters obtained from our method are always more accurate than those obtained from tensor CPD. 

Fixing $N=100$ and $K=20$, the transition probabilities are synthetically generated from a random exponential matrix of size $K\times K$ followed by row-normalization; for the emission probabilities, approximately 50\% of the entries in the $N\times K$ random exponential matrices are set to zero before normalizing the columns, which is shown to satisfy the sufficiently scattered condition with very high probability~\cite{huang2015principled}.
We let the number of HMM realizations go from $10^6$ to $10^8$, and compare the estimation error for the transition matrix and emission matrix by the aforementioned methods. We show the total variation distance between the ground truth probabilities $\Pr[X_{t+1}|X_t]$ and $\Pr[Y_t|X_t]$ and their estimations $\widehat{\Pr}[X_{t+1}|X_t]$ and $\widehat{\Pr}[Y_t|X_t]$ using various methods. The result is shown in Figure~\ref{fig:sim_exp1}. As we can see, the proposed method indeed works best, obtaining almost perfect recovery when sample size is above $10^8$. The CPD based method does not work as well since it cannot obtain accurate estimates of the third-order statistics that it needs. 
Initialized by CPD, EM improves from CPD but the performance is still far away from the proposed method. NMF is not working well since it does not have identifiability in this case.

\section{Application: Hidden Topic Markov Model}

Analyzing text data is one of the core application domains of machine learning. There are two prevailing approaches to model text data. The classical bag-of-words model assumes that each word is \emph{independently} drawn from certain multinomial distributions. These distributions are different across documents, but can be efficiently summarized by a small number of \emph{topics}, again mathematically modeled as distributions over words; this task is widely known as \emph{topic modeling} \cite{hofmann2001unsupervised,blei2003latent}. However, it is obvious that the bag-of-words representation is oversimplified. The $n$-gram model, on the other hand, assumes that words are conditionally dependent up to a window-length of $n$. This seems to be a much more realistic model, although the choice of $n$ is totally unclear, and is often dictated by memory and computational limitations in practice---since the size of the joint distribution grows exponentially with $n$. What is more, it is somewhat difficult to extract ``topics'' from this model, despite some preliminary attempts \cite{wallach2006topic,wang2007topical}.

We propose to model a document as the realization of a HMM, in which the topics are hidden states emitting words, and the states are evolving according to a Markov chain, hence the name \emph{hidden topic Markov model} (HTMM). For a set of documents, this means we are working with a \emph{collection} of HMMs. Similar to other topic modeling works, we assume that the topic matrix is shared among all documents, meaning all the given HMMs share the same emission probability. For the bag-of-words model, each document has a specific topic distribution $\bp_d$, whereas for our model, each document has its own \emph{topic transition probability} $\bTheta_d$; as per our previous discussion, the row-sum and column-sum of $\bTheta_d$ are the same, which are also the topic probability for the specific document. The difference is the Markovian assumption on the topics rather than the over-simplifying independence assumption.

We can see some immediate advantages for the HTMM. Since the Markovian assumption is only imposed on the topics, which are not exposed to us, the observations (words) are not independent from each other, which agrees with our intuition. On the other hand, we now understand that although word dependencies exist for a wide neighborhood, we only need to work with pair-wise co-occurrence probabilities, or 2-grams. This releases us from picking a window length $n$ in the $n$-gram model, while maintaining dependencies between words well beyond a neighborhood of $n$ words. It also includes the bag-of-words assumption as a special case: If the topics of the words are indeed independent, this just means that the transition probability has the special form $\ones\bp_d^\T$. The closest work to ours is by \citet{gruber2007hidden}, which is also termed hidden topic Markov model. However, they make a simplifying assumption that the transition probability takes the form $(1-\epsilon)\bI + \epsilon\ones\bp_d^\T$, meaning the topic of the word is either the same as the previous one, or independently drawn from $\bp_d$. Both models are special cases of our general HTMM.

In order to learn the shared topic matrix $\bM$, we can use the co-occurrence statistics for the entire corpus: Denote the co-occurrence statistics for the $d$-th document as $\bOmega_d$, then $\E\bOmega_d = \bM\bTheta_d\bM^\T$; consequently
\[
\bOmega = \frac{1}{\sum_{d=1}^{D}L_d}\sum_{d=1}^{D}L_d\bOmega_d,
\]
which is an unbiased estimator for
\[
\bM\bTheta\bM^\T = \frac{1}{\sum_{d=1}^{D}L_d}\sum_{d=1}^{D}L_d\bM\bTheta_d\bM^\T,
\]
where $L_d$ is the length of the $d$-th document and $\bTheta$ is conceptually a weighted average of all the topic-transition matrices. Then we may apply Algorithm~\ref{alg:hmm_id} to learn the topic matrix. 

We illustrate the performance of our HTMM by comparing it to three popular bag-of-words topic modeling approaches: pLSA \cite{hofmann2001unsupervised}, LDA \cite{blei2003latent}, and FastAnchor \cite{arora2013practical}, which guarantees identifiability if every topic has a characteristic \emph{anchor word}. Our HTMM model guarantees identifiability if the topic matrix is \emph{sufficiently scattered}, which is a more relaxed condition than the anchor word one. On the Reuters21578 data set obtained at \cite{reuters21578}, we use the raw document to construct the word co-occurrence statistics, as well as bag-of-words representations for each document for the baseline algorithms. We use the version in which the stop-words have been removed, which makes the HTMM model more plausible since any syntactic dependencies have been removed, leaving only semantic dependencies. The vocabulary size of Reuters21578 is around $200,000$, making any method relying on triple-occurrences impossible to implement, and that is why tensor-based methods are not compared here.

\begin{figure}[t]
\centering
\includegraphics[width=0.55\linewidth]{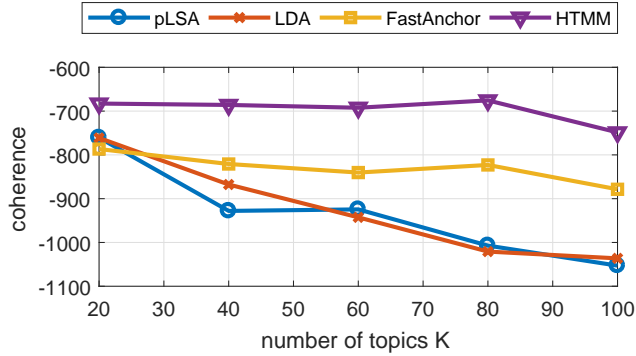}
\caption{Coherence of the topics.}
\label{fig:reuters_coh}
\end{figure}

Because of page limitations, we only show the quality of the topics learned by various methods in terms of coherence. Simply put, a higher coherence means more meaningful topics, and the concrete definition can be found in \cite{arora2013practical} and in the supplementary material. In Figure~\ref{fig:reuters_coh}, we can see that for different number of topics we tried on the entire dataset, HTMM consistently produces topics with the highest coherence. Additional evaluations can be found in the supplementary material.

\section{Conclusion}
We presented an algorithm for learning hidden Markov models in an unsupervised setting, i.e., using only a sequence of observations. Our approach is guaranteed to uniquely recover the ground-truth HMM structure using only pairwise co-occurrence probabilities of the observations, under the assumption that the emission probability is \emph{sufficiently scattered}. Unlike EM, the complexity of the proposed algorithm does not grow with the length of the observation sequence. Compared to tensor-based methods for HMM learning, our approach only requires reliable estimates of pairwise co-occurrence probabilities, which are easier to obtain. We applied our method to topic modeling, assuming each document is a realization of a HMM rather than a simpler bag-of-words model, and obtained improved topic coherence results. We refer the reader to the supplementary material for detailed proofs of the propositions and additional experimental results.

\section*{Appendix}
\appendix

\section{Proof of Proposition 1}
For categorical probabilities $\bm{p}$ and $\bm{q}$, their Kullback-Leiber divergence is defined as
\[
D_\textup{KL}(\bm{p}\|\bm{q}) = \sum_{n=1}^{N} p_n\log\frac{p_n}{q_n},
\]
and their total variation distance is defined as
\[
D_\textup{TV}(\bm{p}\|\bm{q}) = \frac{1}{2}\sum_{n=1}^{N} |p_n-q_n|.
\]

The key to prove Proposition 1 is the fact that the cooccurrence probability $\bOmega$ can be obtained by marginalizing $X_{t-1}$ in the triple-occurrence probability $\underline{\bOmega_3}$, i.e.,
\[
\bOmega(i,j) = \sum_{n=1}^{N}\underline{\bOmega_3}(n,i,j).
\]
Similarly, this holds for the cumulative estimates described in \S2 of the main paper as well,
\[
\widehat{\bOmega}(i,j) = \sum_{n=1}^{N}\widehat{\underline{\bOmega_3}}(n,i,j).
\]

Using the log sum inequality, we have that
\[
\bOmega(i,j)\log\frac{\bOmega(i,j)}{\widehat{\bOmega}(i,j)} \leq
\sum_{n=1}^{N}\underline{\bOmega_3}(n,i,j)
\log\frac{\underline{\bOmega_3}(n,i,j)}{\widehat{\underline{\bOmega_3}}(n,i,j)}.
\]
Summing both sides over $i$ and $j$, we result in
\[
D_\textup{KL}(\widehat{\bOmega}\|\bOmega) \leq 
D_\textup{KL}(\widehat{\underline{\bOmega_3}}\|\underline{\bOmega_3})
\]

Using H\"{o}lder's inequality with $L_1$-norm and $L_\infty$-norm, we have that
\[
|\bOmega(i,j)-\widehat{\bOmega}(i,j)| \leq 
\sum_{n=1}^{N}|\underline{\bOmega_3}(n,i,j) - \widehat{\underline{\bOmega_3}}(n,i,j)|.
\]
Summing both sides over $i$ and $j$ and then dividing by 2, we obtain
\[
D_\textup{TV}(\widehat{\bOmega}\|\bOmega) \leq 
D_\textup{TV}(\widehat{\underline{\bOmega_3}}\|\underline{\bOmega_3})
\]
\hfill\textbf{Q.E.D.}

\section{Proof of Proposition 2}
The volume of a hyper-ball in $\bbR^{n}$ with radius $R$ is
\[
\frac{\pi^{\frac{n}{2}}}{\Gamma(\frac{n}{2}+1)}R^{n}.
\]
The elliptical cone $\cC^*=\{ \x:\|\x\|\leq\ones^\T\x/\sqrt{K-1} \}$ intersecting with the hyperplane $\ones^\T\x=1$ is a hyperball in $\bbR^{K-1}$ with radius $\sqrt{\frac{1}{K(K-1)}}$. Therefore, the volume of the inner-ball is
\[
V_b = \frac{\pi^{\frac{K-1}{2}}}{\Gamma(\frac{K+1}{2})}(K(K-1))^{-\frac{K-1}{2}}.
\]

The nonnegative orthan intersecting with $\ones^\T\x=1$ is a regular simplex in $\bbR^{K-1}$ with side length $\sqrt{2}$. Its volume is
\[
V_s = \frac{\sqrt{K}}{(K-1)!} = \frac{\sqrt{K}}{\Gamma(K)}.
\] 

Their ratio is
\begin{align*}
\frac{V_b}{V_s} 
&= \frac{\frac{\pi^{\frac{K-1}{2}}}{\Gamma(\frac{K+1}{2})}(K(K-1))^{-\frac{K-1}{2}}}
{\frac{\sqrt{K}}{\Gamma(K)}} \\
&= \frac{1}{\sqrt{K}}\left(\frac{\pi}{K(K-1)}\right)^{\frac{K-1}{2}}
\frac{\Gamma(K)}{\Gamma(\frac{K+1}{2})}\\
&= \frac{1}{\sqrt{K}}\left(\frac{\pi}{K(K-1)}\right)^{\frac{K-1}{2}}
\frac{\Gamma(\frac{K}{2})}{2^{1-K}\sqrt{\pi}}\\
&= \frac{1}{\sqrt{\pi K}}\left(\frac{4\pi}{K(K-1)}\right)^{\frac{K-1}{2}}
\Gamma\left(\frac{K}{2}\right)
\end{align*}
\hfill\textbf{Q.E.D.}

This function of volume ratio is plotted in Figure~\ref{fig:vol_ratio}. As we can see, as $K$ increases, the volume ratio indeed goes to zero at a super-exponential rate.

\begin{figure}[t]
\centering
\includegraphics[width=.5\linewidth]{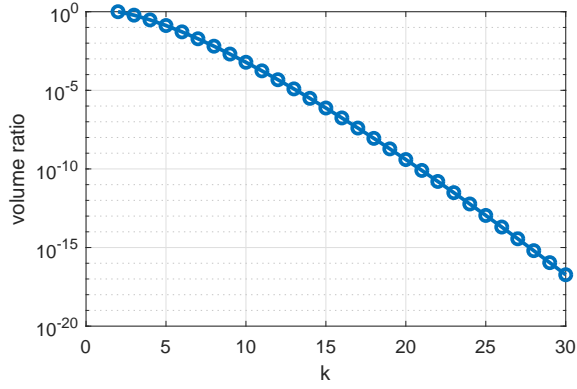}
\caption{The volume ratio between the hyperball obtained by intersecting $\mathcal{C}$ and the hyperplane $\ones^\T\x=1$ and the probability simplex, as $K$ increases.}
\label{fig:vol_ratio}
\end{figure}

\section{Derivation of \textsc{ThetaUpdate}}

It is described in \citep[\S10.2]{boyd2004convex} that for solving a convex equality constrained problem
\begin{align*}
\minimize_{x}~ & f(x) \\
\text{subject to}~ & Ax=b
\end{align*}
using the Newton's method, we start at a feasible point $x$, and the iterative update takes the form $x\leftarrow x - \alpha\Delta_\text{nt}x$, where the Newton direction is calculated from solving the KKT system
\[
\begin{bmatrix}
\nabla^2f(x) & A^\T~ \\ A & 0
\end{bmatrix}
\begin{bmatrix}
\Delta_\text{nt}x \\ d
\end{bmatrix} = 
\begin{bmatrix}
-\nabla f(x) \\ 0
\end{bmatrix}.
\]

Assuming $\nabla^2f(x)\succ0$ and $A$ has full row rank, then the KKT system can be solved via elimination, as described in \citep[Algorithm~10.3]{boyd2004convex}. Suppose $A\in\bbR^{m\times n}$, if $\nabla^2f(x)$ is diagonal, the cost of calculating $\Delta_\text{nt}x$ is dominated by forming and inverting the matrix $ADA^\T$ with $D$ being diagonal.

Now we follow the steps of \citep[Algorithm~10.3]{boyd2004convex} to derive explicit Newton iterates for solving (11). First, we re-write the part of (11) that involve $\bTheta$ here:
\begin{align*}
\minimize_{\bTheta>0}~~ & \sum_{n,\ell=1}^{N}\sum_{k,j=1}^{K}
-\varOmega_{n\ell}\varPi_{n\ell kj}^r \log\varTheta_{kj} + \lambda\sum_{k,j=1}^{K}\varXi_{kj}^r\varTheta_{kj} \\
\textrm{subject to}~~ & \ones^\T\bTheta\ones=1, \bTheta\ones=\bTheta^\T\ones.
\end{align*}

Let $\btheta=\vect(\bTheta)$, then equality constraint has the form $\bm{A}\btheta=\bm{b}$ where
\[
\bm{A} = \begin{bmatrix}
\ones^\T\otimes\ones^\T \\
\ones^\T\otimes\bI - \bI\otimes\ones^\T
\end{bmatrix}.
\]
Matrix $\bm{A}$ does not have full row rank, because the last row of $\bm{A}$ is implied by the rest. Therefore, we can discard the last equality constraint. We will keep it when calculating matrix multiplications for simpler expression, and discard the corresponding entry or column/row for other operations.

Obviously $\bm{A\theta}$ has the form
\[
\bm{A\theta} =
\begin{bmatrix}
\ones^\T\bTheta\ones \\
\bTheta\ones - \bTheta^\T\ones
\end{bmatrix},
\]
which costs $\mathcal{O}(K^2)$ flops. For a slightly more complicated multiplication
\[
\bm{A}\diag(\btheta)\bm{A}^\T = \begin{bmatrix}
\ones^\T\bTheta\ones & \ones^\T\bR^\T - \ones^\T\bTheta \\
~\bTheta\ones - \bTheta^\T\ones & \diag(\bTheta\ones + \bTheta^\T\ones) - \bTheta - \bTheta^\T~ 
\end{bmatrix},
\]
which also costs $\mathcal{O}(K^2)$ flops to compute. For $[~d_0~\bm{d}^\T~]^\T\in\bbR^{K+1}$,
\[
\bm{A}^\T[~d_0~\bm{d}^\T~]^\T = 
\vect\left(d_0\ones\ones^\T + \bm{d}\ones^\T - \ones\bm{d}^\T\right).
\]

At point $\btheta$, the negative gradient is $-\nabla f(\btheta) = \vect(\bm{G})$ where
\[ 
G_{kj} = \frac{\sum_{n,\ell=1}^{N}\varOmega_{n\ell}\varPi_{n\ell kj}^r}{\varTheta_{kj}}
- \lambda\varXi_{kj}^r,
\]
and the inverse of the Hessian $\left( \nabla^2f(\btheta) \right)^{-1} = \diag(\vect(\bm{R}))$ where
\[ 
R_{kj} = \frac{\varTheta_{kj}^2}
{\sum_{n,\ell=1}^{N}\varOmega_{n\ell}\varPi_{n\ell kj}^r}.
\]

Let
\[
\bH = \begin{bmatrix}
\ones^\T\bR\ones & \ones^\T\bR^\T - \ones^\T\bR \\
~\bR\ones - \bR^\T\ones & \diag(\bR\ones + \bR^\T\ones) - \bR - \bR^\T~ 
\end{bmatrix}
\]
and then delete the last column and row of $\bH$, and
\[
S_{kj} = R_{kj} G_{kj}
\]
\[
\bg = \begin{bmatrix}
\ones^\T\bS\ones \\ ~\bS\ones - \bS^\T\ones~
\end{bmatrix}
\]
and then delete the last entry of $\bg$. We can first solve for $\bm{d}$ by
\[
\bm{d} = \bH^{-1}\bg = [~d_0~\widetilde{\bm{d}}^{~\T}~]^\T.
\]
Then we append a zero at the end of $\bm{d}$ and define
\[
[~\bm{d}^\T~0~]^\T \rightarrow \bm{d} = [~d_0~\widetilde{\bm{d}}^{\,\T}~]^\T.
\]

The Newton direction $\Delta_\text{nt}\btheta$ can then be obtained via
\[
\Delta_\text{nt}\btheta = \left(\nabla^2f(\btheta)\right)^{-1}
\left( \bm{A}^\T\bm{d} + \nabla f(\btheta) \right).
\]

In matrix form, it is equivalent to
\[
\Delta_\text{nt}\bTheta = \bR \ast \left(d_0\ones\!\ones^\T + \widetilde{\bm{d}}\ones^\T - \ones\widetilde{\bm{d}}^{~\T} - \bm{G}\right).
\]

The in-line implementation of \textsc{ThetaUpdate} is given here.

\begin{algorithm}[h]
\caption{\textsc{ThetaUpdate}}\label{alg:theta_update}
\begin{algorithmic}[1]
\REQUIRE $\bTheta,\widetilde{\bTheta},\lambda,\rho$
\STATE $\bXi\leftarrow|\det\bTheta|\bTheta^{-\T}$
\REPEAT
\STATE $\bG\leftarrow \widetilde{\bTheta}\big/\bTheta - \lambda\bXi$
\STATE $\bR\leftarrow \bTheta\ast\bTheta\big/\widetilde{\bTheta}$
\STATE $\displaystyle \bH \leftarrow \begin{bmatrix}
\ones^\T\bR\ones & \ones^\T\bR^\T \!-\! \ones^\T\bR \\
~\bR\ones \!-\! \bR^\T\ones & \diag(\bR\ones \!+\! \bR^\T\ones) \!-\! \bR \!-\! \bR^\T~ 
\end{bmatrix}$
\STATE delete the last column and row of $\bH$
\STATE $\displaystyle \bg\leftarrow\begin{bmatrix}
\ones^\T(\bR\ast\bG)\ones \\ ~(\bR\ast\bG)\ones - (\bR\ast\bG)^\T\ones~
\end{bmatrix}$
\STATE delete the last entry of $\bg$
\STATE $\bm{d} \leftarrow \bH^{-1}\bg$
\STATE $[~d_0~\widetilde{\bm{d}}^{~\T}~]^\T\leftarrow[~\bm{d}^\T~0~]^\T$
\STATE \(
\Delta_\text{nt}\bTheta = \bR \ast \left(d_0\ones\!\ones^\T + \widetilde{\bm{d}}\ones^\T - \ones\widetilde{\bm{d}}^{~\T} - \bm{G}\right)
\)
\STATE $\bTheta\leftarrow\bTheta-\Delta_\text{nt}\bTheta$
\UNTIL{convergence}
\RETURN $\bTheta$
\end{algorithmic}
\end{algorithm}

\section{Proof of Proposition 3}
The form of Algorithm~1 falls exactly into the framework of block successive convex approximation (BSCA) algorithm proposed by \cite{razaviyayn2013unified} with only one block of variables. Invoking \citep[Theorem~4]{razaviyayn2013unified}, we have that every limit point of Algorithm~1 is a stationary point of Problem~(7). Additionally, since the constraint set of Problem~(7) is compact, \emph{any} sub-sequence has a limit point, which is also a stationary point. This proves that Algorithm~1 converges to a stationary point of Problem~(7).
\hfill
\textbf{Q.E.D.}

\section{Additional Synthetic Experiments}

In this section we conduct a similar synthetic experiment to identify HMM parameters, but with a much smaller problem size, so that we can include the classical Baum-Welch algorithm~\citep{baum1970maximization} as another baseline.
Fixing $N=16$ and $K=4$, the transition probabilities are synthetically generated from a random exponential matrix of size $K\times K$ followed by row-normalization; for the emission probabilities, the top $K\times K$ part of the $N\times K$ random exponential matrices are set to be the identity matrix before column normalization, so that it is guaranteed to be sufficiently scattered.
We let the number of HMM realizations go from $10^3$ to $10^5$, and compare the estimation error for the transition matrix and emission matrix by the aforementioned methods. We show the total variation distance between the ground truth probabilities $\Pr[X_{t+1}|X_t]$ and $\Pr[Y_t|X_t]$ and their estimations $\widehat{\Pr}[X_{t+1}|X_t]$ and $\widehat{\Pr}[Y_t|X_t]$ using various methods. The result is shown in Figure~\ref{fig:suppl_small}. 

Similar to the experiment shown in the main paper, the proposed method works the best in terms of estimating the HMM parameters, without sacrificing too much computational times. Much to one's surprise, the Baum-Welch algorithm is not working very well in terms of estimation error. This is possibly because we limit the maximum number of EM iterations to be 500 (default setting of the MATLAB implementation), which may not be enough for convergence. What is expected is that the computational time of Baum-Welch grows linearly with respect to the length of the HMM observations, while other methods are independent from it.

\begin{figure}[t]
\centering
\hspace*{-.1\linewidth}
\includegraphics[width=1.3\linewidth]{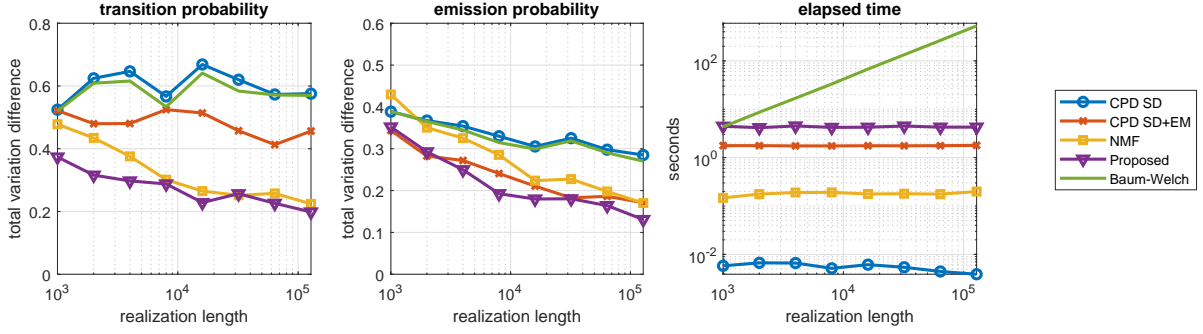}
\caption{The total variation difference between the ground truth and estimated transition probability (left) and emission probability (middle), and the elapsed time (right) for $N=16$ and $K=4$. The total variation difference of the emission probabilities is calculated as $\frac{1}{2K}\|\bM_\natural-\bM_\star\|_1$, since each column of the matrices indicates a (conditional) probability, and the total variation difference is equal to one half of the $L_1$-norm; and similarly for that of the transition probabilities after rescaling the rows of $\bOmega_\natural$ and $\bOmega_\star$ to sum up to one. The result is averaged over 10 random problem instances.}
\label{fig:suppl_small}
\end{figure}

\begin{figure}[t]
\centering
\includegraphics[width=.5\linewidth]{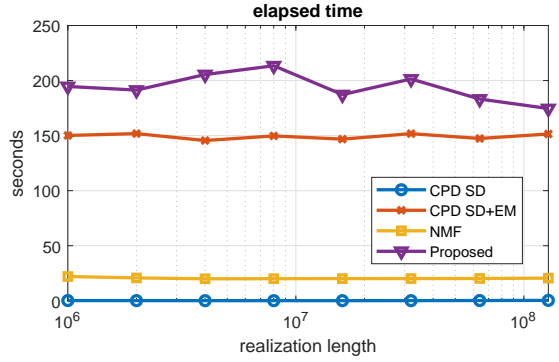}
\caption{The elapsed time for the synthetic experiment with $N=100$ and $K=20$ as in the main paper.}
\label{fig:suppl_time}
\end{figure}

An interesting remark is that when $T=12,800$, the per-iteration elapsed time of Baum-Welch is approximately 1 second. Recall that each iteration of Baum-Welch calls for the forward-backward algorithm, with complexity $\mathcal{O}(K^2T)$. This means for the problem size considered in the main paper, each iteration of Baum-Welch takes approximately 4 minutes to 7 hours, depending on the realization length. This is clearly not feasible in practice.

We also present the elapsed time of the four algorithms excluding the Baum-Welch algorithm for the case considered in the main paper, i.e., $N=100$ and $K=20$. Similar to the timing result shown in Figure~\ref{fig:suppl_small}, the proposed method takes the longest time compared to the other three, but not significantly; also recall that the propose method works considerably better in terms of estimation accuracy. 

\section{Additional HTMM Evaluations}

In the main body of the paper we showed that HTMM is able to learn topics with higher quality using pairwise word cooccurrences. The quality of topics is evaluated using coherence, which is defined as follows. For each topic, a set of words $\mathcal{V}$ is picked (here we pick the top 20 words with the highest probability of appearing). We calculate the number of documents each word $v_1$ appears $\text{freq}(v_1)$ and the number of documents two words $v_1$ and $v_2$ both appear $\text{freq}(v_1,v_2)$. The coherence of that topic is calculated as
\[
\sum_{v_1,v_2\in\mathcal{V}} \log\left(\frac{\text{freq}(v_1,v_2)+\epsilon}{\text{freq}(v_1)}\right) .
\]
The intuition is that if both $v_1$ and $v_2$ both have high probability of appearing in a topic, then they have high probability of co-occurring in a document as well; hence a higher value of coherence indicates a more indicative topic. The coherence of the individual topics are then averaged to get the coherence for the entire topic matrix.

\begin{figure}[t]
\centering
\includegraphics[width=.5\linewidth]{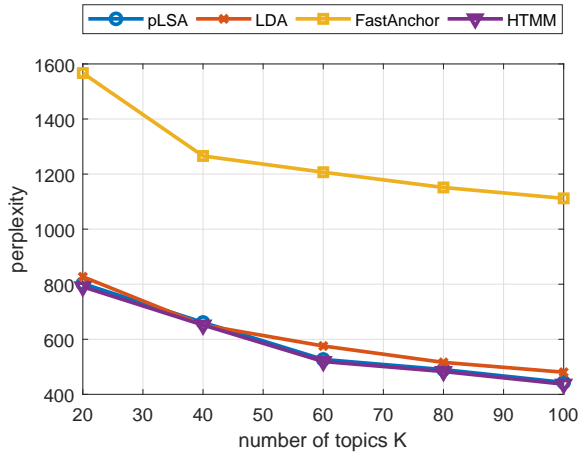}
\caption{The perplexity of different models as number of topics $K$ increases.}
\label{fig:perp}
\end{figure}

Here we show some more evaluation results.
Using the learned topic matrix, we can see how it fits the data directly from \emph{perplexity}, defined as \cite{blei2003latent}
\[
\exp\left( -\frac{\sum_d\log p(\text{doc}_d)}{\sum_d L_d} \right).
\]
A smaller perplexity means the probability model fits the data better. As seen in Figure~\ref{fig:perp}, HTMM gives the smallest perplexity. Notice that since HTMM takes word ordering into account, it is not fair for the other methods to take the bag-of-words representation of the documents. The bag-of-words model is essentially multinomial, whose pdf includes a scaling factor $\frac{n!}{n_1!...n_K!}$ for different combinations of observation orderings. In our case this factor is not included since we \emph{do} know the word ordering in each document. For HTMM the log-likelihood is calculated efficiently using the forward algorithm.

This result is not surprising. Even using the same topic matrix, a bag-of-words model tries to find a $K$-dimensional representation for each document, whereas HTMM looks for a $K^2$-dimension representation. One may wonder if it is causing over-fitting, but we argue that it is not. First of all, we have see that in terms of coherence, HTMM learns a topic matrix with higher quality. For learning feature representations for each document, we showcase the following result. Once we have the topic-word probabilities and topic weights or topic transition probability, we can infer the underlying topic for each word. For bag-of-words models, each word only has one most probable topic in a document, no matter where it appears. For HTMM, once we learn the transition and emission probability, the topic of each word can be optimally estimated using the Viterbi algorithm. For one specific news article from the Reuters21578 data set, the topic inference given by pLSA is:

\begin{framed}
\teight{china} \tfive{daily} \tfour{vermin eat} \tseven{pct} \tone{grain stocks survey provinces} \tseven{and} \tone{cities showed} \tfour{vermin consume} \tseven{and pct} \teight{china} \tone{grain stocks} \teight{china} \tfive{daily} \tone{that} \tseven{each} \tone{year} \tseven{mln} \teight{tonnes} \tseven{pct} \teight{china} \tfive{fruit} \ttwo{output} \tone{left} \tseven{rot and mln} \teight{tonnes} \tseven{pct} \tfour{vegetables} \tfive{paper} \teight{blamed} \tseven{waste inadequate} \tfive{storage} \tseven{and} \tone{bad} \tfour{preservation} \tseven{methods} \tone{government} \tseven{had launched} \tfive{national} \tone{programme reduce} \tseven{waste calling for} \tone{improved} \tfive{technology storage} \tseven{and} \tfour{preservation} \tseven{and} \tone{greater production} \tfour{additives} \tfive{paper} \tseven{gave details}
\end{framed}

The word topic inference given by HTMM is:

\begin{framed}
\tfour{china} \ttwo{daily} \tfour{vermin eat} \teight{pct} \tfour{grain} \ttwo{stocks} \tfour{survey provinces and cities showed vermin consume and pct} \teight{china} \tfour{grain stocks} \ttwo{china} \tfour{daily} \ttwo{that} \tfour{each} \tseven{year} \tfour{mln} \teight{tonnes} \tfour{pct} \teight{china} \tfour{fruit output} \ttwo{left} \tfour{rot and mln} \teight{tonnes} \tfour{pct} \teight{vegetables} \tfour{paper blamed waste} \tseven{inadequate} \tfour{storage} \ttwo{and} \tfour{bad preservation methods government had launched national programme} \teight{reduce} \tfour{waste} \tseven{calling} \tfour{for improved} \teight{technology} \tfour{storage} \ttwo{and} \tfour{preservation and greater production} \ttwo{additives} \tfour{paper gave details}
\end{framed}

As we can see, HTMM gets much more consistent and smooth inferred topics, which agrees with human understandings.

\section{Learning HMMs from Triple-occurrences}

Finally, we show a stronger identifiability result for learning HMMs using triple-occurrence probabilities.

\begin{theorem}
Consider a HMM with $K$ hidden states and $N$ observable states. Suppose the emission probability $\Pr[Y_t|X_t]$ is generic (meaning probabilities not satisfying this condition form a set with Lebesgue measure zero), the transition probabilities $\Pr[X_{t+1}|X_t]$ are linearly independent from each other, and each conditional probability $\Pr[X_{t+1}|X_t]$ contains no more than $N/2$ nonzeros. Then this HMM can be uniquely identified from its triple-occurrence probability $\Pr[Y_{t-1},Y_t,Y_{t+1}]$, up to permutation of the hidden states, for $K\leq \frac{N^2}{16}$.
\end{theorem}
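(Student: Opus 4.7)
The plan is to recognize the triple-occurrence tensor as a low-rank CPD, invoke a generic tensor identifiability theorem of \cite{chiantini2012generic} to pin the factor matrices down up to permutation and scaling, and then peel off the sparse transition matrix by a classical spark-based sparse-recovery argument.

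First, I would write the CPD explicitly. Conditioning on $X_t$ and applying Bayes' rule under stationarity gives
\[
\underline{\bOmega_3}=\sum_{k=1}^{K}\pi_k\,\bm{a}_k\circ\bm{m}_k\circ\bm{b}_k,
\]
where $\pi_k=\Pr[X_t=x_k]$, $\bm{m}_k$ is the $k$-th column of the emission matrix $\bM$, $\bm{b}_k=\bM\btheta_k$ with $\btheta_k$ the $k$-th column of the forward transition $\bTheta$, and $\bm{a}_k=\bM\bm{s}_k$ with $\bm{s}_k$ the reverse-time conditional. The assumed linear independence of the columns of $\bTheta$ rules out rank collapse, so $\underline{\bOmega_3}\in\bbR^{N\times N\times N}$ has a genuine rank-$K$ CPD with factor matrices $(\bm{A},\bM,\bm{B})$.

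Next, I would appeal to \cite{chiantini2012generic}: a generic rank-$R$ tensor in $\bbR^{N}\otimes\bbR^{N}\otimes\bbR^{N}$ has an essentially unique CPD whenever $R\leq N^2/16$. Our tensor is not freely chosen in the ambient tensor space---two of its factor matrices are constrained to lie in $\mathrm{col}(\bM)$---so I would need to verify that the parametrized family $\bM\mapsto\underline{\bOmega_3}(\bM,\bTheta)$ does not sit inside the Chiantini--Ottaviani exceptional variety. Because the exceptional locus is cut out by polynomial equations and the parametrization is polynomial, it suffices to exhibit a single admissible $\bM_0$ for which identifiability is directly certifiable (e.g.\ by a rank test on an appropriately reshaped flattening); a standard Zariski-closedness argument then shows that the pullback of the bad locus is a proper subvariety, and a generic $\bM$ escapes it. The inherent CPD scaling freedom is pinned down by column-stochasticity of $\bM,\bm{A},\bm{B}$ and $\sum_k\pi_k=1$, so $\bM$ is recovered up to a column permutation $\bPi$.

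Finally, with $\bM$ in hand I would recover $\bTheta$ column-by-column from $\bm{b}_k=\bM\btheta_k$ using the sparsity assumption $\|\btheta_k\|_0\leq N/2$. A generic $N\times K$ matrix has full spark, $\spark(\bM)=N+1$; by the classical $\ell_0$-uniqueness bound, two vectors of $\ell_0$-norm below $\spark(\bM)/2=(N+1)/2$ that agree under $\bM$ must coincide, and since $N/2<(N+1)/2$ the sparse $\btheta_k$ is uniquely determined by $\bm{b}_k$. Doing this for every column yields the entire $\bTheta$ under the same permutation $\bPi$, completing the identification of the HMM. The hard part will be the second step: the Chiantini--Ottaviani theorem is stated for generic tensors in the full ambient space, whereas ours lives in a structured subvariety parametrized by $(\bM,\bTheta)$, so the substantive work is the algebraic-geometric argument that this family is \emph{not} contained in the exceptional non-identifiability locus. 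Once that is secured, the CPD-to-parameters step reduces to routine spark-based compressed-sensing bookkeeping.
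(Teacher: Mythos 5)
Your proposal follows essentially the same route as the paper's proof: factor the triple-occurrence tensor as a rank-$K$ CPD, invoke the Chiantini--Ottaviani generic identifiability bound $K\leq N^2/16$, and then recover the sparse transition matrix from the resulting linear system via the $\spark(\bM)=N+1$ and $\ell_0$-uniqueness (Donoho--Elad) argument. The only notable difference is that you are more careful at the genericity-transfer step --- the paper simply asserts that the outer factors $\bm{L}=\bM\overline{\bTheta}^\T$ and $\bm{N}$ are ``generic'' because $\bM$ is generic and $\overline{\bTheta}$ is full rank, whereas you correctly flag that the factor triple lies on a structured subvariety and sketch (though do not execute) the Zariski-closedness argument needed to show it is not contained in the exceptional non-identifiability locus.
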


\begin{proof}
It is clear that identifiability holds when $K\leq N$, so we focus on the case that $N<K\leq \frac{N^2}{16}$.

As we explained in \S1.1, the triple-occurrence probability can be factored into
\[
\Pr[Y_{t-1},Y_t,Y_{t+1}] = \sum_{k=1}^{K}\Pr[X_t=x_k]\Pr[Y_{t-1}|X_t=x_k] 
 \Pr[Y_{t}|X_t=x_k]\Pr[Y_{t+1}|X_t=x_k].
\]
Using tensor notations, this is equivalent to
\[
\underline{\bOmega_3} = \cpd{\bm{p};\bm{L},\bM,\bm{N}},
\]
where
\begin{align*}
p_k & = \Pr[X_t=x_k], \\
L_{nk} &= \Pr[Y_{t-1}=y_n|X_t=x_k], \\
N_{nk} &= \Pr[Y_{t+1}=y_n|X_t=x_k].
\end{align*}
Let $\overline{\bTheta}$ denote the row scaled version of $\bTheta$ so that each row sums to one, then $\overline{\bTheta}$ denotes the transition probability. Then we have
\begin{equation}\label{eq:L=MT}
\bm{L}=\bM\overline{\bTheta}^\T.
\end{equation}

Since $\bM$ is generic and $\overline{\bTheta}$ is full rank, both $\bm{L}$ and $\bm{N}$ are generic as well. The latest tensor identifiability result by \citet[Theorem 1.1]{chiantini2012generic} shows that for a $N\times N\times N$ tensor with generic factors, the CPD $\underline{\bOmega_3} = \cpd{\bm{p};\bm{L},\bM,\bm{N}}$ is essentially unique if
\[
K\leq 2^{ 2\lfloor\log_2N\rfloor - 2 },
\]
or with a slightly worse bound
\[
K\leq \frac{N^2}{16}.
\]

This does not mean that any non-singular $\overline{\bTheta}$ can be uniquely recovered in this case. Equation~\eqref{eq:L=MT} is under-determined. A natural assumption to achieve identifiability is that each row of $\overline{\bTheta}$, i.e., each conditional transition probability $\Pr[X_{t+1}|X_t]$, can take at most $N/2$ non-zeros. In the context of HMM, this means that at a particular hidden state, there are only a few possible states for the next step, which is very reasonable. For a generic $\bM$,
\[
\spark(\bM) = \krank(\bM) + 1 = N+1.
\]
\citet{Donoho2003} showed that for such a $\bM$, and a vector $\bm{\theta}$ with at most $N/2$ nonzeros, $\bm{\theta}$ is the unique solution with at most $N/2$ nonzeros that satisfies $\bM\bm{\theta} = \bm{\ell}$. Therefore, if we seek for the sparsest solution to the linear equation \eqref{eq:L=MT}, we can uniquely recover $\bTheta$ as well.

\end{proof}

\bibliographystyle{plainnat}
\bibliography{hmm_refs}

\end{document}